\documentclass{article}
\usepackage{microtype}
\usepackage{graphicx}
\usepackage{subcaption}
\usepackage{booktabs} 
\usepackage{amsmath} 
\usepackage{graphicx} 
\usepackage{color}
\usepackage{soul}
\renewcommand{\hl}[1]{#1}
\usepackage{hyperref}
\usepackage{caption}

\newcommand{\brwonian}{\mathbf{w}}

\usepackage{mdframed}

\usepackage[accepted]{icml2026}

\usepackage{amsmath}
\usepackage{amssymb}
\usepackage{mathtools}
\usepackage{amsthm}

\usepackage{graphics}
\usepackage[utf8]{inputenc} 
\usepackage[T1]{fontenc}    
\usepackage{hyperref}       
\usepackage{url}            
\usepackage{booktabs}       
\usepackage{amsfonts}       
\usepackage{nicefrac}       
\usepackage{microtype}      
\usepackage{xcolor}         

\usepackage{graphicx}
\usepackage{soul}
\usepackage[ruled,vlined,algo2e]{algorithm2e}
\usepackage{booktabs}
\usepackage{boldline}
\usepackage{siunitx}
\usepackage{wrapfig}
\usepackage{tabularray}
\usepackage{multirow}
\usepackage{arydshln}

\usepackage[capitalize,noabbrev]{cleveref}

\theoremstyle{plain}
\newtheorem{theorem}{Theorem}[section]
\newtheorem{proposition}[theorem]{Proposition}
\newtheorem{lemma}[theorem]{Lemma}
\newtheorem{corollary}[theorem]{Corollary}
\theoremstyle{definition}
\newtheorem{definition}[theorem]{Definition}
\newtheorem{assumption}[theorem]{Assumption}
\theoremstyle{remark}

\usepackage[textsize=tiny]{todonotes}
\usepackage{listings}

\icmltitlerunning{BSDE-based Diffusion}

\begin{document}
\lstnewenvironment{pythoncode}[1][]
{\lstset{language=Python, basicstyle=\ttfamily\small, keywordstyle=\bfseries, stringstyle=\itshape, showstringspaces=false, frame=single, #1}}
{}

\twocolumn[
  \icmltitle{Backward SDEs-based Diffusion for Physics-Constrained Generation}



  \icmlsetsymbol{equal}{*}

  \begin{icmlauthorlist}
    \icmlauthor{Zihao Wang}{yyy}
  \end{icmlauthorlist}

  \icmlaffiliation{yyy}{\url{https://laplace.center}, Department of Computer Science, University of Tennessee, Chattanooga, US}

  \icmlcorrespondingauthor{Zihao Wang}{zihao.wang@tennessee.edu; zihao.wang@ieee.org}

  \icmlkeywords{Machine Learning, ICML}

  \vskip 0.3in
]



\printAffiliationsAndNotice{Laplace Lab}  

\begin{abstract}
Pretrained score-based diffusion models provide strong unconditional priors, yet enforcing measurement or physics consistency in inverse problems is often handled by heuristic guidance, intermittent projections, or task-specific conditional training, with limited guarantees of feasibility at the end of inference. We propose terminal-conditioned inversion for score-based SDE priors. Given a frozen Score-SDE prior and a task-defined terminal feasibility specification, we construct an associated backward stochastic differential equation whose adapted solution defines a principled inverse map from the terminal requirement to a prior state at a chosen noise level. Under standard regularity conditions, we establish existence and uniqueness of the adapted solution and obtain terminal consistency by construction. We further develop a practical neural BSDE solver that composes arbitrary pretrained diffusion priors with domain constraints without modifying the score-defined coefficients, producing an anchored prior state that enables neighborhood sampling for uncertainty characterization. Experiments on toy datasets validate stable terminal-conditioned inversion and distributionally consistent neighborhood sampling. As a real-world case study, we apply the framework to sparse-view CT reconstruction and achieve improved reconstruction quality over representative training-free baselines while satisfying strict measurement feasibility under the prescribed terminal specification. 
Project is available in: \href{https://laplacelab.github.io/BSDEDiffusion/}{https://laplace.center/icmlbsdeI/}
\end{abstract}

\section{Introduction}
\label{sec:intro}

Many practical and scientific tasks require recovering an unknown signal $x$ from indirect and noisy observations
\begin{equation}
y = \mathcal{A}(x) + \eta,
\label{eq:meas}
\end{equation}
where $\mathcal{A}$ is a known or partially known forward operator and $\eta$ denotes noise.
The forward model alone rarely identifies $x$ uniquely: multiple candidates can explain the same $y$.
In high-stakes settings, the goal is not merely to produce a perceptually plausible sample, but to output
solutions that are physically consistent with the observation at the end of inference, for example
\begin{equation}
x \in \mathcal{S}(y) \triangleq \{x:\ \mathcal{A}(x)\approx y\}.
\end{equation}
This exposes a persistent tension.
Physics imposes a rigid terminal requirement, while unsupervised generative priors represent uncertainty through
stochasticity and high-entropy support.
Bridging physical determinism with generative uncertainty is the core challenge.

\begin{table*}[t]
\centering
\small
\setlength{\tabcolsep}{6pt}
\resizebox{\textwidth}{!}{
\begin{tabular}{lccccc}
\hline
\textbf{Framework} & \textbf{Prior} & \textbf{Constraint form} & \textbf{Path cond.?} & \textbf{Terminal consistency} & \textbf{Tuning burden}\\
\hline
Variational, TV & explicit reg. / plug-in & soft, penalty & No & sometimes & medium \\
GAN inversion & GAN generator & soft, physics loss & No & no guarantee & high \\
Diffusion posterior sampling & score & soft & Approx. & no guarantee & high \\
Projection-in-the-loop/PnP & domain pretrained score & intermittent projection & No & often improves, not guaranteed & medium\\
Manifold-constrained diffusion & score & soft or hybrid & Approx. & not guaranteed & medium \\
Diffusion and Schr\"odinger bridges & score & path measure & Yes, path value & yes & high \\
\hline
\textbf{Ours: BSDE--diffusion} & general pretrained score & \textbf{hard terminal map} & \textbf{Yes, terminal value} & \textbf{by design} & \textbf{lower}\\
\hline
\end{tabular}}
\caption{
Comparison of physics-constrained generative frameworks:
Variational~\citep{rudin1992rof,sidky2008tvct,diptv,tvldct};
GAN inversion~\citep{bora2017csgm,shah2018ganpriors,chan2022gleangenerativelatentbank};
DPS and posterior sampling~\citep{chung2022dps,li2024ct_dps_jmi,pmlr-v267-alkhouri25b,pmlr-v267-achituve25a,pmlr-v267-ekstrom-kelvinius25b,pmlr-v235-peng24h};
PnP, RED, diffusion PnP~\citep{yuyangpnp,venkatakrishnan2013pnp,romano2017red,wang2022ddnm,zhu2023diffpir};
manifold constrained~\citep{chung2022mcg,tip,chen2024flowmatchinggeneralgeometries};
bridges and matching~\citep{debortoli2021dsb,heng2021diffusionbridge,wang2025irbridge,pmlr-v235-gushchin24a,pmlr-v267-gushchin25b,pmlr-v267-ksenofontov25a,pmlr-v267-alkhouri25b}.
}
\label{tab:comparison}
\end{table*}

Diffusion and score-based models~\citep{ddpm,song2021scorebased} provide remarkably expressive priors learned from massive data, motivating their integration with forward operators for inverse problems.
A wide range of approaches has emerged, including diffusion-based inverse problem solvers~\citep{kawar2022ddrm},
pseudoinverse or conditional-score constructions~\citep{song2023pseudoinverse},
bridge-based formulations~\citep{wang2025irbridge,debortoli2021dsb,heng2021diffusionbridge},
and posterior-sampling methods such as diffusion posterior sampling~\citep{chung2022dps,li2024ct_dps_jmi,pmlr-v267-alkhouri25b,pmlr-v267-achituve25a,pmlr-v267-ekstrom-kelvinius25b,pmlr-v235-peng24h}.
Despite strong empirical progress, a persistent difficulty is how measurement consistency is enforced.
Most methods rely on step-wise penalties, intermittent projections, or guidance schedules whose strengths and
time dependencies are tuned heuristically~\citep{pmlr-v267-janati25a,pmlr-v235-zheng24f}.
When the inverse problem is severely ill-posed, heuristic enforcement can lead to physically inconsistent yet
visually plausible hallucinations, which is particularly concerning in scientific and medical applications.

At a conceptual level, this difficulty reflects a mismatch in mathematical formulation.
Standard score-based generation is an initial-value stochastic transport:
one starts from a simple reference distribution and simulates a reverse-time SDE to obtain data samples~\citep{song2021scorebased},
built on time-reversal characterizations of diffusions~\citep{anderson1982reverse,pmlr-v235-hirono24a}.
Physics-constrained inference, however, is naturally terminal-value:
the requirement is specified through a feasibility event or an observation.
Injecting penalties or projections into a discrete sampling loop does not, in general, define the path measure
conditioned on a terminal requirement, and can push trajectories away from the intended manifold,
motivating manifold-aware corrections such as MCG~\citep{chung2022mcg}.
These observations point to a missing ingredient: a principled path-space mechanism whose conditioning is posed
directly in terminal form.

In stochastic analysis, terminal-value problems are naturally expressed via backward stochastic differential
equations, BSDEs~\citep{BISMUT1973384,PARDOUX199055}.
A BSDE encodes the terminal requirement explicitly and solves for an adapted pair $(Y_t,Z_t)$.
The process $Z_t$ acts as an adapted control in the martingale representation, steering stochastic dynamics so
that the prescribed terminal specification is satisfied.
This terminal-value viewpoint aligns directly with physics-constrained inference and complements other
path-conditioning paradigms such as diffusion and Schr\"odinger bridges~\citep{debortoli2021dsb,heng2021diffusionbridge}.

We introduce a BSDE-based, terminally conditioned diffusion framework that applies to any pretrained score prior
expressible in Score-SDE form.
We show that, given a pretrained Score-SDE defining a base stochastic dynamics, we can construct an associated BSDE by imposing a terminal condition via a task-defined terminal map.
This map encodes the observation requirement and can be instantiated as a decode, projection onto $\mathcal{S}(y)$,
and encode composition, enforcing physics consistency at the terminal condition rather than through heuristic
time-dependent guidance.
Solving the associated BSDE yields a well-defined inverse mapping from a terminal requirement to a prior state at
a chosen noise level, and it satisfies the terminal specification by construction.
This associated BSDE is not a reverse-time SDE: the pretrained SDE specifies the base diffusion prior, while the
BSDE introduces an adapted pair whose martingale integrand is solved from the terminal requirement.

Our contributions are as follows.
\textbf{First}, we introduce a terminal-conditioned diffusion perspective that brings BSDEs into generative modeling as a
general mechanism for enforcing terminal feasibility.
\textbf{Second}, we provide a rigorous associated-BSDE construction for score-based SDE priors and establish well-posedness
under standard regularity assumptions, ensuring existence and uniqueness of an adapted solution.
\textbf{Third}, we develop a practical solver for general Score-BSDEs that composes arbitrary pretrained diffusion priors
with domain-defined terminal operators through a hard terminal map, avoiding task-specific conditional training and
reducing tuning burden.
\textbf{Finally}, we instantiate the framework on a representative physics-constrained inverse problem, sparse-view CT
reconstruction, as a case study that highlights the methodological gap between terminal-conditioned, feasibility-defined
reconstruction via an associated BSDE and widely used PnP-style step-wise conditioning strategies.

\section{Preliminaries}
\label{sec:prelim}

We review three ingredients used throughout the paper: score-based diffusion in the SDE formulation, physics-constrained inverse problems and common strategies for enforcing measurement consistency summarized in Table~\ref{tab:comparison}, and BSDEs as a terminal-value formalism. This section supports our main objective: a general, well-posed terminal-defined inversion principle applicable to any pretrained score-based SDE prior.

\subsection{Score-Based Diffusion via SDEs}

\noindent\textbf{Scores and denoising score matching:}
Let $p_{\mathrm{data}}$ denote the data distribution on $\mathcal{X}$.
Score-based diffusion models learn the score field $\nabla_x \log p_t(x)$ of a family of perturbed distributions $\{p_t\}_{t\in[0,T]}$.
A standard training principle is denoising score matching.
One samples $x_0 \sim p_{\mathrm{data}}$, selects a noise level $t$, perturbs $x_0$ to obtain $x_t$, and trains a neural network $s_\theta(x_t,t)$ to approximate $\nabla_x \log p_t(x_t)$ \cite{song2021scorebased}.
For Gaussian perturbations of the form $x_t = x_0 + \sigma(t)\epsilon$ with $\epsilon\sim\mathcal{N}(0,I)$, an equivalent objective is
\begin{equation}
\mathcal{L}(\theta)
=
\mathbb{E}_{t,x_0,\epsilon}\Big[
\big\| s_\theta(x_t,t) + \tfrac{x_t-x_0}{\sigma(t)^2}\big\|_2^2
\Big].
\label{eq:dsm}
\end{equation}

\noindent\textbf{Forward-time SDE:}
The SDE viewpoint unifies diffusion learning and sampling in continuous time \cite{song2021scorebased}.
Let $(\Omega,\mathcal{F},\{\mathcal{F}_t\}_{t\in[0,T]},\mathbb{P})$ be a filtered probability space supporting a Brownian motion $(W_t)_{t\in[0,T]}$.
A broad class of perturbation processes can be written as the forward SDE
\begin{equation}
dX_t = f(X_t,t)\,dt + g(t)\,dW_t,\qquad X_0 \sim p_{\mathrm{data}},
\label{eq:forward_sde}
\end{equation}
which induces a marginal density $p_t$ for $X_t$.

\noindent\textbf{Reverse-time SDE:}
Given an estimate $s_\theta(x,t)\approx \nabla_x\log p_t(x)$, samples from $p_{\mathrm{data}}$ can be obtained by simulating the reverse-time SDE \cite{anderson1982reverse,song2021scorebased}
\begin{equation}
dX_t = \big[f(X_t,t) - g(t)^2 s_\theta(X_t,t)\big]\,dt + g(t)\,d\bar W_t,
 t:\ T \rightarrow 0,
\label{eq:reverse_sde}
\end{equation}
initialized from $X_T \sim p_T$.
In practice, Eq.~\eqref{eq:reverse_sde} is integrated numerically, and solver choices influence sample quality \cite{song2021scorebased,karras2022elucidating,jolicoeurmartineau2021gotta,mao2023leapfrog}.
We need to note that the reverse-time SDEs is a sampling mechanism for the learned prior and does not encode terminal feasibility constraints.

\noindent\textbf{Physics-constrained inverse problems:}
We consider inverse problems with measurements defined by Eq.~\eqref{eq:meas}, where $\mathcal{A}$ is a known forward operator.
A central requirement is measurement consistency.
Reconstructed outputs should satisfy the feasible set
\begin{equation}
\mathcal{S}(y) \triangleq \{x:\ \mathcal{A}(x)\approx y\},
\label{eq:feasible_set}
\end{equation}
up to the tolerance implied by the noise model and numerical errors.
This requirement is particularly important in high-stakes settings such as medical imaging, where visually plausible but \textit{\textbf{physically inconsistent hallucinations can be unacceptable}}.

\textbf{Physics constrained Diffusion:}
Table~\ref{tab:comparison} summarizes major paradigms for combining learned priors with physics.
They differ along three axes.
First, some methods enforce constraints through soft penalties, while others apply explicit consistency operators.
Second, some approaches correspond to conditioning a path measure on terminal feasibility, while others apply step-wise corrections without a terminal-value characterization.
Third, practical performance often depends on tuning guidance weights and schedules.
These differences motivate our terminal-conditioned perspective.
Rather than enforcing physics through time-dependent guidance, we will encode feasibility as a terminal condition and solve for an adapted mechanism that achieves it.

\subsection{Backward Stochastic Differential Equations}
A backward stochastic differential equation is specified by a terminal condition and solved backward over time.
Given a terminal random variable $\xi$ and a driver $\hat f$, a standard BSDE seeks adapted processes $(Y_t,Z_t)$ such that \cite{BISMUT1973384,PARDOUX199055}
\begin{equation}
Y_t
=
\xi + \int_t^T \hat f(s,Y_s,Z_s)\,ds - \int_t^T Z_s\,dW_s,
\qquad t\in[0,T].
\label{eq:bsde}
\end{equation}
Equivalently, $dY_t = -\hat f(t,Y_t,Z_t)\,dt + Z_t\,dW_t$ with terminal condition $Y_T=\xi$.
Under standard assumptions that $\hat f$ is Lipschitz in $(Y,Z)$ and $\xi \in L^2$, Eq.~\eqref{eq:bsde} admits a unique adapted solution \cite{PARDOUX199055}.
The process $Z_t$ serves as the martingale integrand that makes the terminal condition achievable within the given filtration.

This terminal-value characterization is distinct from the reverse-time SDE in Eq.~\eqref{eq:reverse_sde}.
Reverse-time SDEs are used to sample from a learned prior by transporting $p_T$ back to $p_{\mathrm{data}}$ using a score field.
In contrast, BSDEs take a terminal specification as an input and solve for an adapted pair $(Y,Z)$ that satisfies it.
This viewpoint is central to our contribution.
\section{Method}
We formalize terminal-conditioned inversion as a general framework for physics-constrained inference under pretrained score-based diffusion priors. Our key observation is that any score-based SDE admits a corresponding BSDE representation, allowing inversion to be posed as a well-defined terminal value problem. By encoding observations or feasibility requirements as terminal conditions, the induced BSDE defines a mathematically well-posed inverse map from terminal constraints to a prior state at a chosen noise level. 

\subsection{BSDEs for Score-based Modeling}
\label{sec:bsde_score}

We introduce the terminal-conditioned inversion problem for arbitrary pretrained score-based SDE priors,
define the associated BSDE representation, and state the well-posedness guarantees that make the inversion map
mathematically sound.
The construction is model-agnostic in the sense that it applies to any prior that admits a score-based SDE form,
and it isolates task information through a terminal specification operator.

\begin{figure*}[t]
    \centering
    \includegraphics[width=\textwidth]{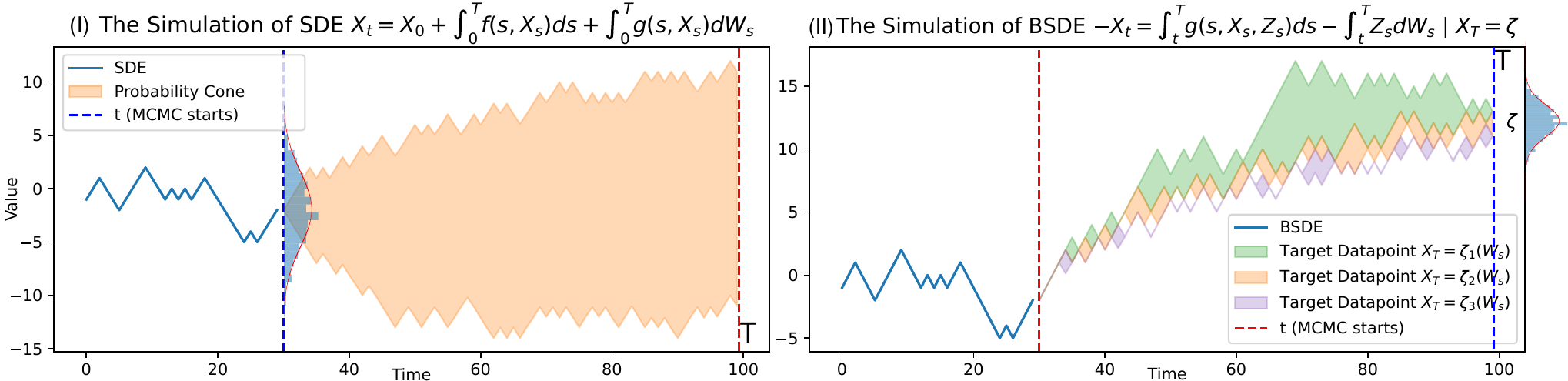}
    \caption{\textbf{Reverse-time SDE versus associated BSDE.}
    Left: reverse-time sampling is an initial value formulation that evolves from a chosen noise law and does not encode
    a fixed terminal specification.
    Right: a BSDE is posed by a terminal condition and solves backward for an adapted representation process that
    enforces the terminal requirement.}
    \label{fig:comparison}
\end{figure*}

\noindent\textbf{Score-SDE priors:}
Let $(\Omega,\mathcal{F},(\mathcal{F}_t)_{t\in[0,T]},\mathbb{P})$ be a filtered probability space supporting a
$d$-dimensional Brownian motion $(W_t)_{t\in[0,T]}$.
A score-based prior is represented through a diffusion process $(X_t)_{t\in[0,T]}$ of the form
\begin{equation}
\mathrm dX_t \;=\; b_\theta(X_t,t)\,\mathrm dt \;+\; \sigma(t)\,\mathrm dW_t,
\label{eq:score_sde_prior}
\end{equation}
where $\sigma(t)$ is a known diffusion scale and the drift $b_\theta$ is specified using a trained score network
$s_\theta(\cdot,t)$.
This class covers standard VP and VE constructions and their score-parametrized variants.


\noindent\textbf{Terminal specification}
Terminal-conditioned inversion is defined by a task-provided terminal requirement.
We encode this requirement through a measurable specification operator
\begin{equation}
\Psi:\mathcal{Y}\to\mathbb{R}^d,
\label{eq:terminal_spec_map}
\end{equation}
which maps an observation $y\in\mathcal{Y}$ to a target terminal latent state.
In practice, $\Psi$ can be implemented by decoding a latent to the image domain, enforcing the task constraint
via a consistency step which its measurement matching under $\mathcal{A}$, and re-encoding back to the
latent space.

\noindent\textbf{SDE Score Associated BSDE}
The key modeling object here is an associated BSDE that pairs with a given SDE defined score prior.
It is posed as a terminal value problem and returns an adapted solution whose initial value is interpreted as the
recovered prior state at a chosen noise level.

\begin{definition}[Associated BSDE for a score prior]
\label{def:associated_bsde}
Fix a horizon $\tau\in(0,T]$.
Let $(X_t)_{t\in[0,\tau]}$ follow the pretrained score prior \eqref{eq:score_sde_prior} driven by the Brownian motion
$(W_t)_{t\in[0,\tau]}$.
Let $\xi$ be an $\mathcal{F}_\tau$-measurable terminal random variable in $L^2(\Omega;\mathbb{R}^d)$.
In terminal-conditioned inversion, we set $\xi=\Psi(y)$ for a given observation $y$.

We seek an adapted pair $(Y,Z)$ with
$Y\in\mathcal{S}^2([0,\tau];\mathbb{R}^d)$ and
$Z\in\mathcal{H}^2([0,\tau];\mathbb{R}^{d\times d})$
that satisfies
\begin{equation}
Y_t
\;=\;
\xi
+\int_t^\tau f_\theta\big(s,X_s,Y_s,Z_s\big)\,\mathrm ds
-\int_t^\tau Z_s\,\mathrm dW_s,
~~ t\in[0,\tau].
\label{eq:associated_bsde}
\end{equation}
We call \eqref{eq:associated_bsde} the associated BSDE induced by the score prior \eqref{eq:score_sde_prior}
and the terminal specification encoded by $\xi$.
\end{definition}

\noindent
Equation \eqref{eq:associated_bsde} is a BSDE in the standard sense.
The terminal condition is imposed at the right endpoint $t=\tau$, and the solution is required to be adapted to
$(\mathcal{F}_t)$.
The process $Z_t$ is an unknown adapted representation process solved from the terminal condition.
This is the structural distinction from reverse-time SDE sampling, where the diffusion coefficient is prescribed and
no terminal constraint is enforced.

\noindent\textbf{Well-posedness:}
To use \eqref{eq:associated_bsde} as a general inversion tool, we need existence and uniqueness of an adapted solution.
In our setting, the BSDE driver is induced by the pretrained score prior through its score network and noise schedule.
Concretely, we consider score-induced drivers of the form
\begin{equation}
f_\theta(t,x,y,z)\;\triangleq\;\hat f\big(t,\,s_\theta(y,t),\,z\big),
\label{eq:score_induced_driver}
\end{equation}
where $\hat f$ is a fixed measurable map determined by the chosen prior family and the terminal-conditioning mechanism.
The analysis follows classical Lipschitz BSDE theory under standard regularity assumptions commonly adopted for neural score models.

\begin{assumption}
\label{ass:bsde_lip}
There exist constants $L_y,L_z\ge 0$ such that for all $t\in[0,\tau]$ and all
$x\in\mathbb{R}^d$, $y,y'\in\mathbb{R}^d$, $z,z'\in\mathbb{R}^{d\times d}$,
\begin{equation}
\big\|f_\theta(t,x,y,z)-f_\theta(t,x,y',z')\big\|
\;\le\;
L_y\,\|y-y'\| \;+\; L_z\,\|z-z'\|.
\label{eq:driver_lip}
\end{equation}
Moreover,
\begin{equation}
\mathbb{E}\int_0^\tau \|f_\theta(t,X_t,0,0)\|^2\,\mathrm dt \;<\;\infty,
\qquad
\mathbb{E}\|\xi\|^2 \;<\;\infty.
\label{eq:driver_integrable}
\end{equation}
\end{assumption}

\begin{theorem}[Well-posedness of the associated BSDE]
\label{thm:wellposed_associated}
Under Assumption~\ref{ass:bsde_lip}, the BSDE \eqref{eq:associated_bsde} admits a unique adapted solution
$(Y,Z)\in\mathcal{S}^2([0,\tau];\mathbb{R}^d)\times\mathcal{H}^2([0,\tau];\mathbb{R}^{d\times d})$.
\end{theorem}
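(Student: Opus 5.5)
We reduce the statement to the classical Pardoux--Peng theorem by treating the forward prior path as a random parameter of the driver. Define the random driver $F(\omega,t,y,z)\triangleq f_\theta(t,X_t(\omega),y,z)$. We first check that $F$ is a standard Lipschitz driver. Since $X$ solves \eqref{eq:score_sde_prior}, it is continuous and adapted. The map $f_\theta$ is measurable, because it is built from the measurable $\hat f$ and the network $s_\theta$ via \eqref{eq:score_induced_driver}. Hence $F(\cdot,\cdot,y,z)$ is progressively measurable for every fixed $(y,z)$. By \eqref{eq:driver_lip}, $F$ is uniformly Lipschitz in $(y,z)$ with constants $L_y,L_z$ that do not depend on $(\omega,t)$. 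By \eqref{eq:driver_integrable}, $F(\cdot,\cdot,0,0)\in\mathcal{H}^2$ and $\xi\in L^2(\mathcal{F}_\tau)$. These are exactly the hypotheses of \citet{PARDOUX199055} on the horizon $[0,\tau]$, so invoking that result already yields the theorem. For completeness we sketch the argument itself.

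We would use a Picard contraction on the Banach space $\mathcal{B}=\mathcal{H}^2\times\mathcal{H}^2$, equipped with the weighted norm $\|(Y,Z)\|_\beta^2=\mathbb{E}\int_0^\tau e^{\beta s}(\|Y_s\|^2+\|Z_s\|^2)\,ds$. Given $(U,V)\in\mathcal{B}$, we set $M_t=\mathbb{E}[\xi+\int_0^\tau F(s,U_s,V_s)\,ds\mid\mathcal{F}_t]$. This is a square-integrable martingale, because the Lipschitz bound and $F(\cdot,0,0)\in\mathcal{H}^2$ give $F(\cdot,U,V)\in\mathcal{H}^2$. The martingale representation theorem in the Brownian filtration then produces a unique $Z\in\mathcal{H}^2$ with $M_t=M_0+\int_0^t Z_s\,dW_s$. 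We define $Y_t=M_t-\int_0^t F(s,U_s,V_s)\,ds$ and set $\Phi(U,V)=(Y,Z)$. One checks directly that $(Y,Z)$ satisfies \eqref{eq:associated_bsde} with frozen driver $F(s,U_s,V_s)$. Doob's inequality then gives $Y\in\mathcal{S}^2$.

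The key estimate is the contraction step. Take two inputs and write $\delta Y,\delta Z,\delta F$ for the differences of the corresponding outputs and driver values. Apply Itô's formula to $e^{\beta t}\|\delta Y_t\|^2$ on $[t,\tau]$. Use that $\delta Y_\tau=0$, that the stochastic integral is a true martingale (by the $\mathcal{S}^2\times\mathcal{H}^2$ integrability together with BDG), and Young's inequality $2\langle \delta Y,\delta F\rangle\le \beta\|\delta Y\|^2+\beta^{-1}\|\delta F\|^2$. This yields
\begin{equation*}
\|(\delta Y,\delta Z)\|_\beta^2\le \frac{C(1+\tau)\,2\max(L_y^2,L_z^2)}{\beta}\,\|(\delta U,\delta V)\|_\beta^2 .
\end{equation*}
Choosing $\beta$ large makes $\Phi$ a strict contraction. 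Banach's fixed point theorem then gives existence and uniqueness in $\mathcal{B}$. The $\mathcal{S}^2$ regularity of $Y$ follows as above, and uniqueness in $\mathcal{S}^2\times\mathcal{H}^2$ follows because $\mathcal{S}^2\subset\mathcal{H}^2$.

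The main obstacle is the measurability and adaptedness bookkeeping rather than the estimates. First, we need $X$ to be well defined and adapted. This requires \eqref{eq:score_sde_prior} to have a strong solution, which we take as part of the standing setup; for instance, it holds if $b_\theta$ is Lipschitz. Second, the filtration must be the augmented Brownian filtration, since martingale representation is needed to produce $Z$. If $(\mathcal{F}_t)$ is larger than this, one must either restrict to the Brownian filtration or add an orthogonal martingale term. We would state explicitly that $(\mathcal{F}_t)$ is generated by $W$. Finally, we note that the dependence of $f_\theta$ on $y$ through $s_\theta(y,t)$ means Assumption~\ref{ass:bsde_lip} implicitly requires a Lipschitz score network. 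This is assumed, not proved.
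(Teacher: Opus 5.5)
Your proposal is correct and follows essentially the same Pardoux--Peng route as the paper's appendix proof: solve the frozen-driver (linear) BSDE by martingale representation, then obtain a contraction in the $e^{\beta t}$-weighted $L^2$ norm by taking $\beta$ large. The differences are minor. You get existence and uniqueness together from Banach's fixed-point theorem, whereas the paper runs a Picard iteration and proves uniqueness separately via It\^o's formula and Gr\"onwall. You also make explicit two points the paper uses tacitly: the random driver $f_\theta(t,X_t,y,z)$, and the requirement that the filtration be Brownian for the representation step.
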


\noindent
Theorem~\ref{thm:wellposed_associated} is proved in the Appendix Sec. \ref{app:proof_wellposed}. The prove follows classical BSDE well-posedness results for Lipschitz drivers
\cite{PARDOUX199055,pengwellposed}.
It provides the mathematical foundation for terminal-conditioned inversion.
Given an observation $y$ and terminal specification $\xi=\Psi(y)$ with $\xi\in L^2$, the inversion map is well-defined by the unique initial value $Y_0$ of the adapted BSDE solution.

The associated BSDE enforces the terminal specification by construction; we have the corollary \ref{cor:terminal_consistency}:
\begin{corollary}[Terminal consistency]
\label{cor:terminal_consistency}
Let $(Y,Z)$ be the unique solution in Theorem~\ref{thm:wellposed_associated}.
Then $Y_\tau=\xi$ holds almost surely.
In terminal-conditioned inversion with $\xi=\Psi(y)$, the terminal feasibility requirement encoded by $\Psi$ is
satisfied almost surely by the BSDE solution.
\end{corollary}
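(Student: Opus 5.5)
The plan is to read the terminal identity off the integral equation \eqref{eq:associated_bsde}, evaluated at the endpoint $t=\tau$. By Theorem~\ref{thm:wellposed_associated}, there is a unique pair $(Y,Z)\in\mathcal{S}^2([0,\tau];\mathbb{R}^d)\times\mathcal{H}^2([0,\tau];\mathbb{R}^{d\times d})$. This pair satisfies \eqref{eq:associated_bsde} for every $t\in[0,\tau]$, almost surely. Because $Y\in\mathcal{S}^2$, we may work with a version having continuous paths. We may also take the identity to hold simultaneously for all $t$ on one full-measure set, since both sides are continuous in $t$ and agreement on rational times extends to all times.

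Setting $t=\tau$, both integrals run over the degenerate interval $[\tau,\tau]$ and vanish. For the Lebesgue integral this needs
\begin{equation*}
\int_0^\tau\|f_\theta(s,X_s,Y_s,Z_s)\|\,\mathrm ds<\infty \quad \text{a.s.}
\end{equation*}
This follows from Assumption~\ref{ass:bsde_lip}. Indeed, $\|f_\theta(s,X_s,Y_s,Z_s)\|\le\|f_\theta(s,X_s,0,0)\|+L_y\|Y_s\|+L_z\|Z_s\|$. Each term on the right is in $L^2(\Omega\times[0,\tau])$, because of \eqref{eq:driver_integrable} and because $Y\in\mathcal{S}^2$ and $Z\in\mathcal{H}^2$. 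Cauchy--Schwarz in time then gives pathwise integrability. For the stochastic integral, $Z\in\mathcal{H}^2$ makes $\int_0^t Z_s\,\mathrm dW_s$ a continuous square-integrable martingale, and $\int_t^\tau=\int_0^\tau-\int_0^t$ equals zero at $t=\tau$. Hence $Y_\tau=\xi$ almost surely.

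For the second claim, substitute $\xi=\Psi(y)$. This is a legitimate terminal datum: it is deterministic given $y$, hence $\mathcal{F}_\tau$-measurable and in $L^2$, so Assumption~\ref{ass:bsde_lip} holds for it. The first claim then gives $Y_\tau=\Psi(y)$ almost surely. The terminal state therefore lies in the range of the specification operator, which is precisely the feasibility requirement that $\Psi$ encodes. In particular, when $\Psi$ is built as decode, projection onto $\mathcal{S}(y)$, and re-encode, decoding $Y_\tau$ returns the projected feasible point.

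There is no substantive obstacle. The only point needing care is the null-set bookkeeping: a solution defined only up to modification might a priori satisfy \eqref{eq:associated_bsde} only for almost every $t$. Using the continuous $\mathcal{S}^2$ version, as described above, resolves this.
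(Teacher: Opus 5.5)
Your argument is correct and matches the paper's, which gives no separate proof and treats the corollary as holding by construction: the solution from Theorem~\ref{thm:wellposed_associated} satisfies \eqref{eq:associated_bsde} for every $t\in[0,\tau]$, and at $t=\tau$ both integrals vanish, so $Y_\tau=\xi=\Psi(y)$ almost surely. Your continuous-version and null-set bookkeeping, together with the pathwise integrability of the driver obtained from Assumption~\ref{ass:bsde_lip}, simply make explicit what the paper leaves implicit. One small overreach: your closing remark that decoding $Y_\tau$ returns the projected feasible point holds only if the encoder $E$ and decoder $D$ satisfy $D(E(p))=p$ at that point, which a lossy latent autoencoder need not do, and the corollary itself only asserts $Y_\tau=\Psi(y)$ almost surely, so drop that remark or state it as an extra assumption.
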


\noindent\textbf{BSDE is fundamentally distinct from reverse-time SDE}
Reverse-time SDE and BSDE are mathematically distinct objects and arise from different problem statements. Reverse-time SDE is a time-reversal characterization of a forward diffusion and is therefore the standard backbone in many plug-and-play (PnP)–style score methods for inverse problems: under suitable regularity assumptions on the forward coefficients and the existence of time-marginal densities, the reversed process is again a diffusion whose
drift is identified by the forward drift together with density-dependent correction terms involving the score of the time marginals \citep{haussmann1986time}.
This time-reversal characterization underlies score-based generative modeling, where the reverse-time dynamics used
for sampling depend on the score of the perturbed data distribution and are integrated from a reference noise law
toward the data end \citep{song2021scorebased}.
In contrast, a BSDE specifies a terminal random variable and seeks an adapted pair $(Y,Z)$ satisfying a backward
integral equation, where $Z$ is the martingale integrand in the Doob--Meyer or martingale representation sense
\citep{PARDOUX199055,el1997backward}.
Accordingly, reverse-time SDE addresses the time-reversal and sampling dynamics of a given diffusion, whereas our
terminal-conditioned inversion is posed as a terminal-value problem whose solution is an adapted process pair determined by the terminal specification.

\begin{figure}[t]
    \centering
    \includegraphics[width=0.5\textwidth]{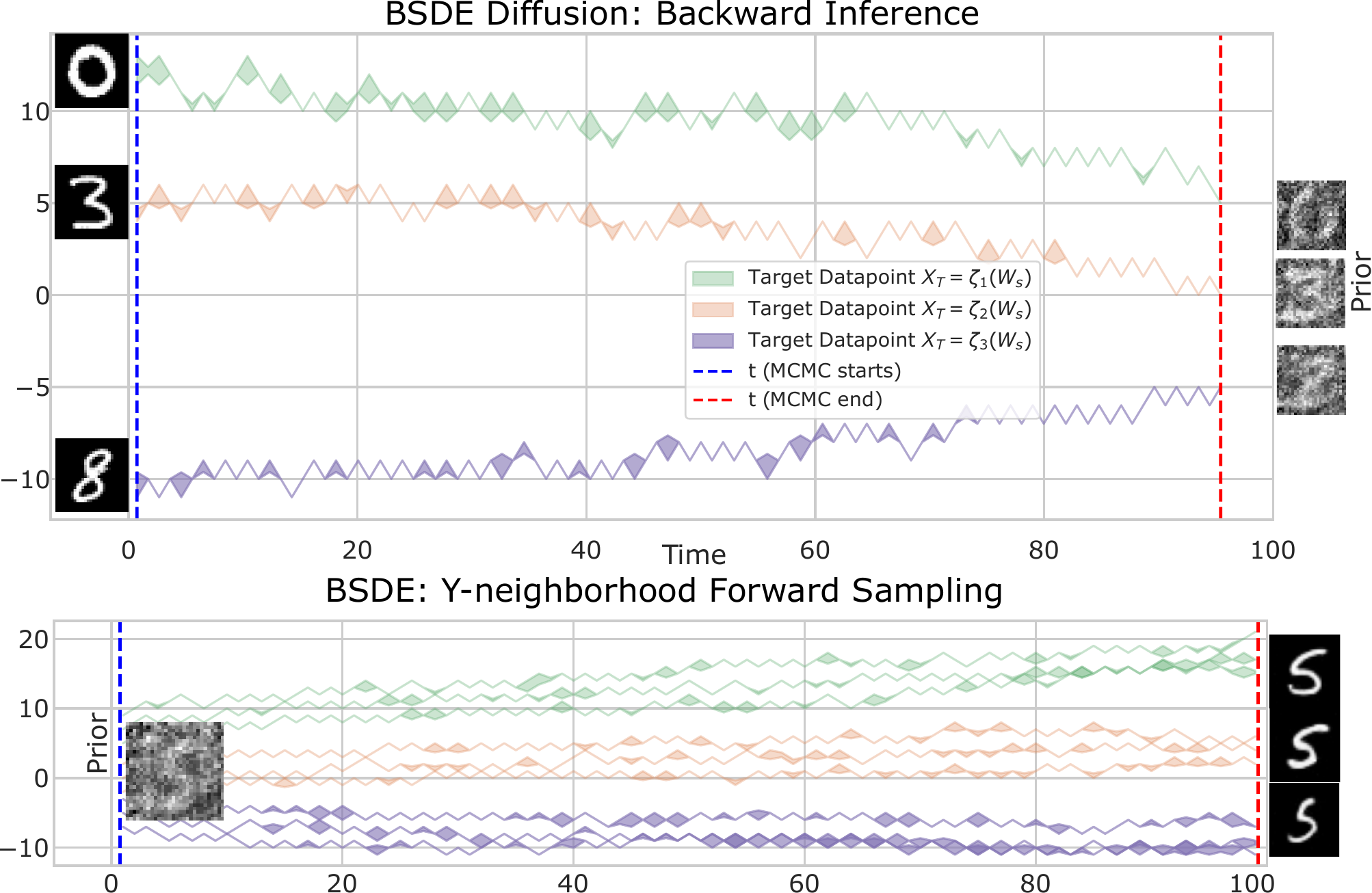}
    \caption{Backward inversion and forward use of the recovered prior state.
    Top: Example of solving the associated BSDE with terminal specification to obtain the recovered prior state $Y_0$.
    Bottom: generate feasible outputs by corresponding Score-SDE.}
    \label{fig:backwardforward}
\end{figure}

\noindent\textbf{Using the recovered prior state via neighborhood sampling:}
\label{forwardgeneration}
Terminal-conditioned inversion returns a recovered prior state $Y_0$ at the chosen noise level.
In inverse problems, a single observation may admit multiple feasible explanations, and exploring this ambiguity is often
desirable for uncertainty quantification.
A practical mechanism is neighborhood sampling around the recovered state.

Let $(Y,Z)$ be the associated BSDE solution for a terminal specification $\xi=\Psi(y)$.
Sample perturbations $\delta$ from a user-chosen local distribution and form perturbed prior states
\begin{equation}
\tilde Y_0 \;=\; Y_0 + \delta.
\label{eq:y0_perturb}
\end{equation}
Each $\tilde Y_0$ is then propagated through the pretrained diffusion dynamics associated with the score prior,
producing a family of outputs whose diversity is controlled by the perturbation scale.
Since all samples share the same terminal specification mechanism through the recovered anchor $Y_0$, this procedure
provides an efficient way to generate multiple plausible solutions without modifying the pretrained prior.

\subsection{Solve the BSDE-based Diffusion Model}
\label{sec:solver}

The associated score-BSDE in Section~\ref{sec:bsde_score} is nonlinear and typically does not admit an analytical
solution.
We therefore rely on numerical solvers to compute the recovered prior state and the adapted representation process.
We present two solver families that are sufficient for our purposes.
The first is a classical regression-based solver that approximates conditional expectations on a time grid.
The second is a deep BSDE solver that parameterizes the representation process with a neural network and optimizes it
by terminal matching, which enables efficient GPU acceleration in our implementation.
A broader overview of BSDE solvers can be found in \cite{Chessari_2023}.

\noindent\textbf{Time discretization:}
Let $0=t_0<t_1<\cdots<t_N=T$ be a uniform grid with step size $\Delta t=T/N$, and denote
$\Delta W_k^i=W_{t_{k+1}}^i-W_{t_k}^i$ for the $i$th Brownian path.
For a BSDE of the form
$Y_t=\xi+\int_t^T f(s,Y_s,Z_s)\,\mathrm ds-\int_t^T Z_s\,\mathrm dW_s$,
a standard explicit discretization yields the recursion
\begin{equation}
y_{k+1}^i
=
y_k^i
-
f(t_k,y_k^i,z_k^i)\,\Delta t
+
z_k^i\,\Delta W_k^i,
~~ k=0,\ldots,N-1,
\label{eq:bsde_discretization}
\end{equation}
where $y_k^i\approx Y_{t_k}$ and $z_k^i\approx Z_{t_k}$ along the $i$th simulated path.

\noindent\textbf{Regression-based solver:}
A classical approach estimates $Z_{t_k}$ by approximating conditional expectations on the grid
\cite{bsde_discrete,Lin_regress,longstaff2001valuing,Chessari_2023}.
Given Monte Carlo (MC) samples $\{(y_k^i,\Delta W_k^i)\}_{i=1}^M$, one uses a basis expansion to regress quantities of the
form $\mathbb{E}\!\left[\Delta W_k\,y_{k+1}\,\middle|\,\mathcal{F}_{t_k}\right]$
onto a chosen finite-dimensional function class.
This yields an estimate of $z_k$ that is then inserted into the forward recursion
\eqref{eq:bsde_discretization}.
The recovered prior state is obtained as the empirical average of the resulting $y_0^i$ values over the MC
paths.

The traditional BSDE solver cannot be used for solving the proposed high dimensional BSDE-Diffusion model.
For high-dimensional state spaces, regression on hand-crafted bases becomes inefficient.
We therefore introduce the learning based solver for solving the proposed BSDE-Diffusion model.
The core idea is to parameterize the representation process by a neural network \cite{deepbsde2} and fit it by minimizing a terminal
matching objective.

We treat the initial value as a learnable parameter $\alpha$ and set $y_0^i=\alpha$ for all paths.
We parameterize $Z_{t_k}$ by a neural network $\Phi_\beta$ and set
$z_k^i=\Phi_\beta(t_k,\eta_k^i)$,
where $\eta_k^i$ denotes the chosen input features at time $t_k$ along the $i$th path.
Given $(\alpha,\beta)$, we propagate $(y_k^i,z_k^i)$ forward using \eqref{eq:bsde_discretization} to obtain $y_N^i$,
and minimize the terminal loss
\begin{equation}
\mathcal{L}(\alpha,\beta)
=
\frac{1}{M}\sum_{i=1}^M \big\|\xi-y_N^i\big\|^2.
\label{eq:deepbsde_loss}
\end{equation}
Gradient-based optimization yields parameters $(\alpha,\beta)$, after which the recovered prior state is given by
$Y_0\approx \alpha$.
Algorithm~\ref{algo:deepbsde} summarizes the procedure. 

\begin{algorithm}
\SetAlgoLined
\DontPrintSemicolon
\SetKwInput{KwInput}{Input}
\SetKwInput{KwOutput}{Output}
\KwInput{Terminal specification $\xi$, step size $\Delta t$, number of paths $M$, initial parameters $(\alpha,\beta)$, learning rate $\lambda_l$}
\KwOutput{Recovered prior state $\alpha$ and network parameters $\beta$}
\KwData{Brownian paths $\{W_{t_k}^i\}_{i=1,k=0}^{M,N}$ and increments $\Delta W_k^i$}
\Repeat{convergence}{
    Set $y_0^i=\alpha$ for $i=1,\ldots,M$\;
    \For{$k=0$ \KwTo $N-1$}{
        $z_k^i=\Phi_\beta(t_k,\eta_k^i)$\;
        $y_{k+1}^i=y_k^i-f(t_k,y_k^i,z_k^i)\Delta t+z_k^i\Delta W_k^i$\;
    }
    Update $(\alpha,\beta)\leftarrow(\alpha,\beta)-\lambda_l\,\nabla_{(\alpha,\beta)}\frac{1}{M}\sum_{i=1}^M\|\xi-y_N^i\|^2$\;
}
\caption{Deep BSDE solver by terminal matching}
\label{algo:deepbsde}
\end{algorithm}

\noindent

\section{Application and Experiment}

Our experiments follow a two-stage narrative that links theory to practice.

\textbf{Part \ref{sec:exp_toy_mnist}) toy validation of BSDE diffusion theory :}
We first use controlled toy benchmarks to empirically validate the theoretical claims of BSDE-diffusion:
(a) terminal-value conditioning enables stable latent inversion which should be consistent with BSDE well-posedness.
(b) the recovered prior encoding is distributionally correct in the sense that neighborhood sampling produces class-consistent outputs with improved distributional agreement.

\textbf{Part \ref{sec:exp_ct}) Physics-constrained CT reconstruction:}
We then evaluate the same terminal-value mechanism on a representative physics-constrained inverse problem: sparse-view CT reconstruction (SVCT) reconstruction.
Here the central objective is terminal physical consistency : reconstructions must satisfy the CT measurement model
within a prescribed numerical tolerance, which is essential for reliable deployment in mission-critical imaging scenarios.

For toy conditional generation, we report distributional similarity via Jensen--Shannon divergence (lower is better)
and shape similarity via average cosine similarity (higher is better).
For CT reconstruction, we report (a) image quality metrics (\textsc{PSNR}/\textsc{SSIM}) and
(b) physics consistency  quantified by the relative measurement residual
$\|\mathcal{A}(x)-y\|/\|y\|$, evaluated {after} applying the terminal consistency  map.

\begin{figure*}
    \centering
    \includegraphics[width=\linewidth]{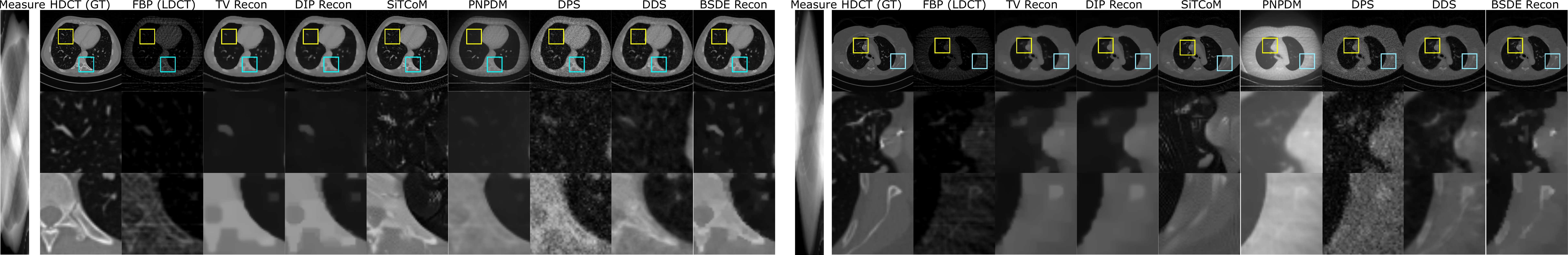}
    \caption{Qualitative comparison for SVCT lung screening.
    From left to right: physics measurement, high resolution reference, sparse-view recon (FBP), TV, DIP-TV, \hl{DPS, DDS*,} SiTCom, PnPDM, and the proposed BSDE.
    Highlighted regions emphasize pulmonary parenchyma and small structural details that are critical for early cancer screening.
    The proposed method better preserves subtle lung textures and low-contrast structures while suppressing noise and \textit{\textbf{remaining consistent with the measurement model}}. More examples are in Appendix. \ref{app:ldctsamples}.}
    \label{fig:ct_qual}
\end{figure*}

\begin{table*}[h]
\centering
\renewcommand{\arraystretch}{0.5}
\setlength{\tabcolsep}{2pt}
\resizebox{1.0\linewidth}{!}{%
\begin{tabular}{lrrrrrrrcccc}
\toprule
Method
& MAE$\downarrow$
& PSNR$\uparrow$
& SSIM$\uparrow$
& NPSNR$\uparrow$
& NSSIM$\uparrow$
& NMSE$\downarrow$
& NCC$\uparrow$
& Task-spec. training?
& Training-free?
& Unconditional prior?
& General prior? \\
\midrule
FBP
& 37.22
& 14.12
& 0.366
& 8.91
& 0.268
& 0.4695
& 0.902
& No
& Yes
& N/A
& N/A \\
TV \cite{tvldct}
& 4.81
& 27.89
& 0.807
& 27.56
& 0.816
& 0.0205
& 0.978
& No
& Yes
& N/A
& N/A \\
TVDIP \cite{diptvldct}
& 4.56
& 28.08
& 0.816
& 27.79
& 0.827
& 0.0197
& 0.979
& No
& Yes
& Yes
& Yes \\
\midrule
PnPDM \cite{zheng2025inversebench}
& 23.38
& 19.06
& 0.551
& 22.36
& 0.691
& 0.3531
& 0.934
& No
& Yes
& Yes
& No \\
\hl{DPS} \cite{chung2022dps}
& 9.84
& 24.27
& 0.494
& 24.36
& 0.508
& 0.0473
& 0.950
& No
& Yes
& Yes
& \hl{Yes} \\
\hl{DDS$^*$} \cite{dds}
& 4.38
& 31.10
& 0.806
& 30.93
& 0.811
& 0.0118
& 0.988
& No
& Yes
& Yes
& \hl{No} \\
SiTCoM \cite{pmlr-v267-alkhouri25b}
& 5.83
& 27.34
& 0.695
& 27.37
& 0.704
& 0.0265
& 0.973
& No
& Yes
& Yes
& \hl{Yes} \\
\midrule
BSDE (Proposed)
& \textbf{3.57}
& \textbf{30.99}
& \textbf{0.868}
& \textbf{30.85}
& \textbf{0.869}
& \textbf{0.0101}
& \textbf{0.989}
& No
& Yes
& Yes
& Yes \\
\bottomrule
\end{tabular}}
\caption{SVCT reconstruction quality. We additionally report method attributes: whether the approach requires task-specific training, is training-free at test time, uses an unconditional prior, and whether the prior is general. *See special footnote [\ref{fn:dds_note}] for DDS.}
\label{tab:ct_results}
\end{table*}

\subsection{Toy Validation}

\noindent\textbf{MNIST: terminal-conditioned inversion and stable prior encodings.}
\label{sec:exp_toy_mnist}
We validate terminal-conditioned inversion in a controlled visual setting using MNIST.
We train a score model under a variance-exploding SDE perturbation and then construct an associated BSDE whose terminal condition is a prescribed target sample $\xi$.
We solve for the adapted pair $(Y_t,Z_t)$ such that $Y_T=\xi$.

We adopt the driver
\[
\hat f\big(s_\theta(Y_t,t),Z_t\big) = -\sigma_t^2 s_\theta(Y_t,t) + \alpha Z_t,
\]
and set $\alpha=0$ to isolate the effect of terminal conditioning.
The corresponding BSDE is
\begin{equation}
\mathrm dY_t = -\sigma_t^2 s_\theta(Y_t,t)\,\mathrm dt + Z_t\,\mathrm dW_t,
\qquad Y_T=\xi.
\label{eq:mnist_bsde}
\end{equation}
This setup probes the terminal-value nature of the model.
The solver learns an adapted process $Z_t$ that enforces the terminal specification.

\noindent\textbf{Stability under repeated solves and neighborhood sampling.}
BSDE inversion maps a terminal sample $\xi$ to an initial state $Y_0$, which serves as a code in the prior space.
Under the well-posedness conditions, the adapted solution is unique, suggesting that repeated solves for the same $\xi$ should produce consistent encodings.
We therefore report the terminal feasibility error $|Y_T-\xi|$ and the variability of the recovered $Y_0$ across solver restarts and random seeds.
To verify that $Y_0$ is a meaningful encoding rather than an optimization artifact, we also sample local neighborhoods
\begin{equation}
\tilde Y_0 = Y_0 + \lambda\epsilon,\qquad \epsilon\sim\mathcal{N}(0,I),
\label{eq:y0_neighborhood}
\end{equation}
and generate outputs by running the BSDE-conditioned forward dynamics from $\tilde Y_0$.
As  Fig.~\ref{fig:application} shows, it yields controlled diversity while preserving digit identity, supports that $Y_0$ is a stable prior code induced by the terminal condition.

\noindent\textbf{Star lightcurves: quantitative distributional consistency}
\label{sec:exp_toy_lightcurves}
MNIST provides an intuitive visualization, but it does not directly quantify distributional fidelity.
We therefore use star lightcurves, where class-conditional generation is commonly evaluated with distributional metrics.
The question is whether terminal-conditioned inversion yields a prior encoding whose neighborhood sampling matches the target conditional distribution.

We compare against representative conditioning mechanisms, including diffusion-based perturbation editing, GAN inversion, and a variational baseline based on an LSTM-VAE.
We follow prior practice in this dataset and report Jensen-Shannon divergence and average cosine similarity for class-conditional generation of Cepheid, Eclipsing Binary, and RR Lyrae lightcurves.
\begin{table}[b]
\centering
\caption{Unsupervised conditional generation on star lightcurves.
Lower is better for Jensen--Shannon divergence.
Higher is better for average cosine similarity.}
\label{tab:starlightcurve}
\resizebox{1.\linewidth}{!}{%
\begin{tabular}{ccccccc}
\hline
\multirow{2}{*}{} & \multicolumn{3}{c}{\textbf{Jensen--Shannon Divergence}$\downarrow$} & \multicolumn{3}{c}{\textbf{Average Cos Similarity}$\uparrow$} \\
\cline{2-7}
 & Cepheid & Eclipsing B & RR Lyrae & Cepheid & Eclipsing B & RR Lyrae \\
\hline
\textbf{SDE Diff.}      & 0.3521 & 0.4397 & 0.3942 & 0.4338 & 0.4648 & 0.6284 \\
\textbf{GAN Inv.}       & 0.3714 & 0.4374 & 0.3639 & 0.2860 & 0.5495 & 0.3302 \\
\textbf{LSTMVAE}        & 0.3929 & 0.4916 & 0.3721 & 0.3794 & 0.0357 & 0.5299 \\
\cdashline{1-7}
\textbf{BSDE Diff.}     & 0.3323 & 0.4223 & 0.3510 & 0.7095 & 0.4321 & 0.6838 \\
\hline
\end{tabular}}
\end{table}
Table~\ref{tab:starlightcurve} shows improved distributional agreement relative to the baselines.
Together with MNIST, this supports the claim that terminal-conditioned inversion yields stable codes and distributionally consistent neighborhood sampling.

\subsection{Real-world Case Study: SVCT Reconstruction}
\label{sec:exp_ct}

We instantiate terminal-conditioned inversion on a physics-constrained inverse problem, SVCT reconstruction.
This case study serves two purposes.
It demonstrates how the terminal condition encodes feasibility under a real measurement operator.
It tests whether the associated BSDE solver can enforce terminal physical consistency without injecting measurements into the pretrained score prior.

\noindent\textbf{Measurement model and hard terminal requirement.}
Let $y_0\in\mathcal Y$ denote the SVCT measurement.
We model the measurement formation by
\[y_0 \;=\; \mathcal A(x_0) + \eta\],
where $x_0\in\mathcal X$ is the unknown standard CT image at the diffusion data end and
$\mathcal A:\mathcal X\to\mathcal Y$ is the known CT forward operator that already includes the SVCT degradation mechanism.
Our goal is terminal physical consistency: the final reconstruction $\hat x$ must satisfy
\[
\|\mathcal A(\hat x)-y_0\| \le \varepsilon \|y_0\|
\]
for a prescribed numerical tolerance $\varepsilon>0$.
We enforce this requirement through a hard terminal specification and a terminal measurement loss used by the solver.
The pretrained score prior remains unconditional and freeze.
\begin{figure}[]
    \centering
    \includegraphics[width=0.5\textwidth]{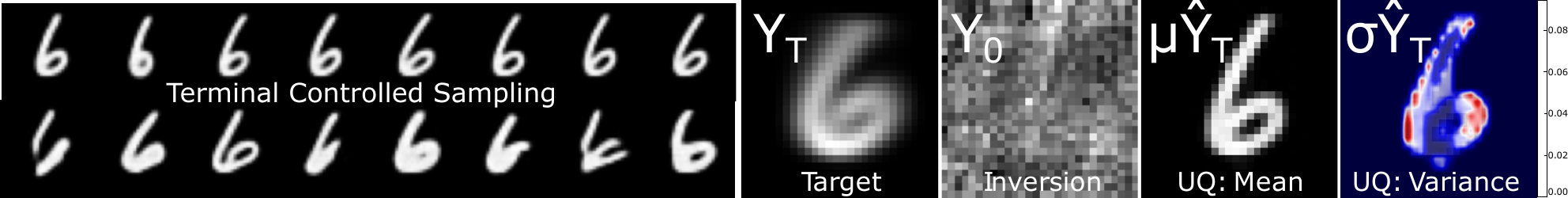}
    \caption{Terminal-conditioned inversion and neighborhood sampling with the BSDE mechanism. See more in Appendix \ref{app:toysamples}}
    \label{fig:application}
\end{figure}

\noindent\textbf{Pretrained SDE Diffusion}
We employ a pretrained score-based diffusion prior \cite{song2021scorebased} in latent space \cite{LDM} and keep it frozen during reconstruction.
Let $X_t\in\mathbb R^d$ denote the latent state and consider the unconditional VE-SDE
\begin{equation}
\mathrm dX_t \;=\; g(t)\,\mathrm dW_t,
\qquad t\in[0,T],
\label{eq:ct_ve_sde}
\end{equation}
where $W_t$ is a standard Brownian motion and $g(t)>0$ is the diffusion scale.
The marginal noise level satisfies $\sigma^2(t)=\int_0^t g^2(s)\,\mathrm ds$.
A pretrained score network $s_\vartheta(x,t)\approx\nabla_x\log p_t(x)$ specifies the denoising field at each time.
The measurement $y_0$ does not enter \eqref{eq:ct_ve_sde}.

\noindent\textbf{Imaging terminal specification:}
We encode the CT measurement requirement as a hard terminal specification through a measurable map
$\Psi:\mathcal Y\to\mathbb R^d$.
Given the observed sinogram $y_0$, $\Psi(y_0)$ returns a data-end latent specification whose decoded image is
measurement-consistent under $\mathcal A$ within the prescribed tolerance.
This exposes the CT constraint to the BSDE through a terminal boundary condition rather than through the pretrained score drift.

\noindent\textbf{SDE-associated BSDE Form:}
Fix a diffusion noise level $\tau$ and solve an associated BSDE on the interval $[0,\tau]$.
We use the BSDE time variable $t\in[0,\tau]$ as an inversion-time parameter, mapped to diffusion time $r=\tau-t$,
so that $t=\tau$ corresponds to the diffusion data end $r=0$ and $t=0$ corresponds to the diffusion state at noise level $r=\tau$.
The BSDE terminal condition is
\begin{equation}
Y_\tau \;=\; \Psi(y_0),
\label{eq:ct_bsde_terminal_bc}
\end{equation}
and the associated BSDE reads
\begin{equation}
Y_t
\;=\;
Y_\tau
+\int_t^\tau f\big(s,X_s,Y_s,Z_s\big)\,\mathrm ds
-\int_t^\tau Z_s\,\mathrm dW_s,
~~ t\in[0,\tau].
\label{eq:ct_bsde_cont_correct}
\end{equation}
Equivalently, $\mathrm dY_t=-f(t,X_t,Y_t,Z_t)\,\mathrm dt+Z_t\,\mathrm dW_t$ with terminal condition
\eqref{eq:ct_bsde_terminal_bc}.
The inversion target is the recovered noisy prior state at diffusion noise level $\tau$, represented by the BSDE initial value $Y_0$.

We discretize the VE-SDE using a schedule $\{\sigma_k\}_{k=0}^K$ and increments
$\Delta\sigma_k^2=\sigma_{k+1}^2-\sigma_k^2$.
We use the pretrained score-defined denoising recursion from index $k_\tau$ down to $0$ to map the inferred latent state at noise level $\tau$
to the data end.
Let $\widetilde X_0$ denote the resulting data-end latent and let $D$ denote the fixed decoder mapping latent states.
We obtain $\widetilde x_0=D(\widetilde X_0)$ and set the final reconstruction as:
$\hat x \;\triangleq\; \widetilde x_0$.

\noindent\textbf{Terminal measurement consistency:}
We solve the BSDE by minimizing the measurement discrepancy
\begin{equation}
\min \ \mathcal L_{\mathrm{meas}}
\;\triangleq\;
\mathbb E\Big[\big\|\mathcal A(\hat x)-y_0\big\|^2\Big],
\label{eq:ct_meas_loss}
\end{equation}
where the expectation is taken over the innovation variables used by the discrete score-defined recursion.
This loss enforces terminal physical consistency in the measurement domain while keeping the pretrained prior unconditional:
we do not inject $y_0$ into the SDE coefficients nor into score evaluation, and instead solve for an adapted BSDE solution whose inferred
prior state yields a terminal reconstruction consistent with $\mathcal A$ and $y_0$.

\noindent\textbf{Baseline and Result}
We compare against filtered back-projection (FBP), total variation minimization (TV) \cite{tvldct}, deep image prior (TVDIP) \cite{diptvldct}, and four diffusion-based inverse problem solvers: PnPDM \cite{zheng2025inversebench}, DPS \cite{chung2022dps}, DDS \cite{dds}, and SiTCoM \cite{pmlr-v267-alkhouri25b}.
These baselines are chosen to cover complementary reconstruction paradigms: analytic and variational methods (FBP, TV), a training-free DIP baseline (TVDIP), and representative diffusion-based reconstruction methods.
All methods use the same $\mathcal{A}$ and are evaluated under the same metrics.
FBP serves as a fast analytic baseline, TV \cite{tvldct} as a classical physics-constrained regularization baseline, TVDIP \cite{diptvldct} as a training-free DIP baseline, and PnPDM \cite{zheng2025inversebench}, DPS \cite{chung2022dps}, DDS\footnote{DDS \cite{dds} tested here uses CT pretrained prior, it is included here as additional baselines rather than main fair-comparison.\label{fn:dds_note}} \cite{dds}, and SiTCoM \cite{pmlr-v267-alkhouri25b} as diffusion-based baselines.

Quantitatively, Table~\ref{tab:ct_results} summarizes SVCT reconstruction performance on the SVCT dataset.
The proposed BSDE solver achieves the best MAE ($3.57$), SSIM ($0.868$), NSSIM ($0.869$), NMSE ($0.0101$), and NCC ($0.989$), while remaining competitive in PSNR/NPSNR.
Relative to the non-diffusion baselines, MAE decreases from $37.22$ under FBP and $4.56$ under TVDIP to $3.57$, while SSIM increases from $0.366$ under FBP and $0.816$ under TVDIP to $0.868$.
Compared with the diffusion baselines built without dataset-specific retraining in our setting, BSDE substantially improves over PnPDM, DPS, and SiTCoM.
We note that DDS uses a released \emph{CT-domain prior}, so it does not correspond to exactly the same general-prior setting studied in this paper.
We report DDS for reference in Table~\ref{tab:ct_results}, but do not make it the main focus of the discussion here. A more detailed discussion of this mismatch and the resulting fairness issue is deferred to the Appendix.

Figure~\ref{fig:ct_qual} corroborates the quantitative trends.
FBP and TVDIP retain residual streaking and structured artifacts.
TV suppresses noise but over-smooths fine structures in the highlighted regions.
Among the diffusion baselines, PnPDM, DPS, and SiTCoM can sharpen details but may also exhibit contrast drift or reduced structural fidelity under step-wise soft measurement guidance.
In contrast, BSDE better preserves subtle textures and edges while maintaining measurement consistency through the terminal feasibility map, yielding reconstructions that remain visually faithful and physically constrained.

\section{Ablation}

\paragraph{Effect of terminal conditioning.}
We first conduct a controlled ablation at resolution $128 \times 128$ to isolate the contribution of terminal conditioning within the proposed BSDE framework. We compare three variants. 
\textbf{Hard terminal constraint} denotes the full model, where the BSDE correction network is optimized with the terminal measurement loss. 
\textbf{Soft terminal penalty} replaces the hard terminal conditioning mechanism with a soft penalty toward the initial terminal estimate $z_{\tau,\mathrm{init}}$. 
\textbf{No terminal constraint} removes the BSDE correction network and directly optimizes the terminal latent variable $z_{\tau}$.

\begin{table}[t]
\centering
\caption{Ablation of terminal conditioning on $128\times128$ CT reconstruction. The hard terminal constraint gives the best reconstruction quality and the lowest measurement residual, confirming the importance of terminal feasibility in the BSDE formulation.}
\label{tab:ablation_terminal_conditioning}
\resizebox{\linewidth}{!}{
\begin{tabular}{lcccc}
\toprule
Method & PSNR$\uparrow$ & SSIM$\uparrow$ & NMSE$\downarrow$ & Residual$\downarrow$ \\
\midrule
Hard terminal constraint 
& $\mathbf{35.36 \pm 2.06}$ 
& $\mathbf{0.9460 \pm 0.0165}$ 
& $\mathbf{0.00352 \pm 0.00224}$ 
& $\mathbf{0.00128 \pm 0.00021}$ \\
Soft terminal penalty 
& $33.10 \pm 0.78$ 
& $0.9194 \pm 0.0072$ 
& $0.00543 \pm 0.00174$ 
& $0.00203 \pm 0.00022$ \\
No terminal constraint 
& $17.38 \pm 0.75$ 
& $0.4188 \pm 0.0695$ 
& $0.2033 \pm 0.0689$ 
& $0.2008 \pm 0.0238$ \\
\bottomrule
\end{tabular}}
\end{table}

Table~\ref{tab:ablation_terminal_conditioning} shows that terminal conditioning is essential for both image quality and measurement consistency. Replacing the hard terminal constraint with a soft penalty already degrades PSNR from $35.36$ to $33.10$ and increases the residual from $0.00128$ to $0.00203$, indicating that a soft attraction to $z_{\tau,\mathrm{init}}$ does not fully preserve terminal feasibility. Removing the terminal constraint causes a much larger failure: PSNR drops to $17.38$, SSIM decreases to $0.4188$, and the residual increases by two orders of magnitude to $0.2008$. This confirms that the performance gain is not merely due to latent optimization, but comes from the BSDE correction mechanism that enforces the terminal measurement requirement.

\paragraph{Effect of terminal timepoint $\tau$.}
We further perform a standalone sweep over the BSDE terminal timepoint 
$\tau \in \{0.05, 0.15, 0.35, 0.45, 0.50\}$ to isolate the effect of the inversion horizon. The terminal timepoint controls the noise level at which the recovered prior state is anchored. A smaller $\tau$ keeps the inversion closer to the data end and can improve reconstruction fidelity, while a larger $\tau$ requires a stronger correction from a noisier latent state and can make terminal recovery less stable.

Empirically, $\tau=0.05$ achieves the highest mean PSNR, $35.52 \pm 2.20$, but it also requires a stronger terminal correction. In contrast, $\tau=0.15$ achieves a nearly identical PSNR of $35.21 \pm 1.84$ with better stability, providing the best quality--stability tradeoff in our experiments. When $\tau$ becomes larger, performance drops substantially: PSNR decreases to $31.34 \pm 1.69$ at $\tau=0.35$ and further to $23.48 \pm 3.29$ at $\tau=0.45$. These results suggest that terminal-conditioned BSDE inversion benefits from anchoring at a moderate diffusion noise level.

\section{Conclusion}

We proposed terminal-conditioned inversion for score-based SDE priors via an associated BSDE.
Feasibility is imposed through a terminal specification, while the pretrained score prior remains unconditional and frozen.
Under standard regularity conditions, the resulting BSDE is well posed and defines a unique adapted inverse mapping from the terminal requirement to a prior anchor.
We also developed a neural BSDE solver that makes this construction numerically tractable for complex score priors without conditional retraining.
Toy experiments validate stable inversion and distributionally consistent sampling.
As a case study, SVCT reconstruction shows improved fidelity together with measurement consistency.
Future work will focus on faster solvers and broader application cases.

\section*{Acknowledgment}
This research is supported by the Ruth S. Holmberg Grant, and is supported
in part by the Provost’s Office of University of Tennessee at Chattanooga. We thank the anonymous reviewers for their constructive feedback.

\section*{Impact Statement}

This paper presents work whose goal is to advance the field of Machine Learning. There are many potential societal consequences of our work, none which we feel must be specifically highlighted here.


\bibliography{example_paper}
\bibliographystyle{icml2026}

\newpage
\appendix
\onecolumn
\section{Notations}
We collect the main notation used throughout the paper.

\begin{itemize}
\item \textbf{Probability space.}
$(\Omega,\mathcal F,(\mathcal F_t)_{t\in[0,T]},\mathbb P)$ is a filtered probability space.
$(W_t)_{t\in[0,T]}$ is a $d$-dimensional Brownian motion adapted to $(\mathcal F_t)$.

\item \textbf{Data and distributions.}
$\mathcal X\subseteq\mathbb R^d$ is the data space.
$p_{\mathrm{data}}$ is the data distribution on $\mathcal X$.
For a stochastic process $(X_t)$, $p_t$ denotes the marginal density of $X_t$ at time $t$.

\item \textbf{Score field and network.}
The score field at time $t$ is $\nabla_x \log p_t(x)$.
$s_\theta(x,t)$ is a neural network approximation of $\nabla_x \log p_t(x)$, with parameters $\theta$.

\item \textbf{Score-SDE prior.}
$T$ is the diffusion time horizon.
The forward Score-SDE prior is
\[
\mathrm dX_t=f(X_t,t)\,\mathrm dt+g(t)\,\mathrm dW_t,\qquad X_0\sim p_{\mathrm{data}}.
\]
When needed, $\bar W_t$ denotes a Brownian motion for reverse-time sampling dynamics.

\item \textbf{Inverse problems and measurements.}
$A$ is a known forward operator and $y$ is the measurement.
$S(y)$ denotes the measurement-feasible set (or an $\varepsilon$-feasible set) specified by the task.

\item \textbf{Terminal specification.}
$\Psi$ is a measurable terminal specification operator that encodes the feasibility requirement induced by $y$.
The BSDE terminal random variable is $\xi\coloneqq\Psi(y)$.

\item \textbf{Associated BSDE (terminal-conditioned inversion).}
$\tau\in(0,T]$ is the chosen inversion horizon (noise level).
We solve an adapted pair $(Y_t,Z_t)_{t\in[0,\tau]}$ satisfying
\[
Y_t=\xi+\int_t^\tau f_\theta(s,X_s,Y_s,Z_s)\,\mathrm ds-\int_t^\tau Z_s\,\mathrm dW_s,
\qquad t\in[0,\tau],
\]
where $f_\theta$ is the BSDE driver.
The recovered prior state is $Y_0$.

\item \textbf{Function spaces and regularity.}
$S^2([0,\tau];\mathbb R^d)$ denotes square-integrable adapted processes with finite $\mathbb E[\sup_{t\in[0,\tau]}\|Y_t\|^2]$.
$H^2([0,\tau];\mathbb R^{d\times d})$ denotes square-integrable predictable processes with finite
$\mathbb E[\int_0^\tau \|Z_t\|^2\,\mathrm dt]$.
$\xi\in L^2(\Omega;\mathbb R^d)$ means $\mathbb E\|\xi\|^2<\infty$.
$L_y,L_z$ are Lipschitz constants of $f_\theta$ in $(y,z)$.

\item \textbf{Discrete noise schedule.}
$\{\sigma_k\}_{k=0}^K$ is a discrete noise schedule and $\Delta\sigma_k^2\coloneqq\sigma_{k+1}^2-\sigma_k^2$.
$\epsilon_k\sim\mathcal N(0,I)$ are standard Gaussian innovations.
\end{itemize}

\section{Proofs}
\label{sec:proofs}

\subsection{Notation and function spaces}

Let $(\Omega,\mathcal{F},(\mathcal{F}_t)_{t\in[0,T]},\mathbb{P})$ support a $d$-dimensional Brownian motion
$(W_t)_{t\in[0,T]}$.
We use the standard spaces (as in Pardoux--Peng's proof strategy):
\[
\mathcal{H}^2 := \Big\{ Z:\Omega\times[0,T]\to\mathbb{R}^{d\times d}\ \text{progressively measurable}:
\ \mathbb{E}\int_0^T \|Z_t\|^2 dt < \infty \Big\},
\]
\[
\mathcal{S}^2 := \Big\{ Y:\Omega\times[0,T]\to\mathbb{R}^{d}\ \text{adapted, continuous}:
\ \mathbb{E}\big[\sup_{t\in[0,T]}\|Y_t\|^2\big] < \infty \Big\}.
\]
(We use the Euclidean norm $\|\cdot\|$ and the associated inner product $\langle\cdot,\cdot\rangle$.)

We consider the BSDE
\begin{equation}
\label{eq:app_bsde}
Y_t=\xi+\int_t^T \hat f\!\big(s_\theta(Y_s,s),Z_s,s\big)\,ds-\int_t^T Z_s\,dW_s,\qquad t\in[0,T].
\end{equation}
For convenience, define the effective driver
\[
G(t,y,z):=\hat f\!\big(t, s_\theta(y,t), z\big),
\]
where $s_\theta(\cdot,t):\mathbb{R}^d\to\mathbb{R}^m$ and
$\hat f(t,\cdot,\cdot):\mathbb{R}^m\times\mathbb{R}^{d\times d}\to\mathbb{R}^d$.
Then \eqref{eq:app_bsde} can be rewritten as
\[
Y_t=\xi+\int_t^T G(s,Y_s,Z_s)\,ds-\int_t^T Z_s\,dW_s.
\]

\subsection{Step A: Properties of the driver $G$ (from Assumption~\ref{ass:bsde_lip})}
\label{app:proof_wellposed}

\begin{lemma}[Lipschitzness of $G$ in $(y,z)$]
\label{lem:app_driver_lip}
Under Assumption~\ref{ass:bsde_lip}, for all $t\in[0,T]$ and all $(y,z),(y',z')$,
\[
\|G(t,y,z)-G(t,y',z')\|
\le L_y\|y-y'\|+L_z\|z-z'\|.
\]
\end{lemma}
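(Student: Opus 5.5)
This lemma is essentially a restatement of Assumption~\ref{ass:bsde_lip}, so the plan is to identify $G$ with the driver $f_\theta$ and read off the Lipschitz bound. By \eqref{eq:score_induced_driver}, $f_\theta(t,x,y,z)=\hat f(t,s_\theta(y,t),z)$ does not depend on $x$. Hence for every $t$, every $x\in\mathbb{R}^d$ and every $(y,z)$,
\[
f_\theta(t,x,y,z)=G(t,y,z).
\]
So the first step is to fix an arbitrary $x$ (say $x=0$) and write $G(t,y,z)-G(t,y',z')=f_\theta(t,x,y,z)-f_\theta(t,x,y',z')$.

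The second step is to apply \eqref{eq:driver_lip} with this fixed $x$. This gives
\[
\|G(t,y,z)-G(t,y',z')\|\le L_y\|y-y'\|+L_z\|z-z'\|
\]
for all $t\in[0,\tau]$ and all $(y,z),(y',z')$, with exactly the constants of Assumption~\ref{ass:bsde_lip}.

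The only real obstacle is bookkeeping rather than analysis, and it has two parts:
\begin{itemize}
\item \emph{Time horizon.} The appendix rewrites the BSDE on $[0,T]$, while the assumption is stated on $[0,\tau]$. I would either regard the appendix's $T$ as the inversion horizon $\tau$, which is the only interval on which the associated BSDE is posed, or note that the lemma is only used on that interval.
\item \emph{Argument order.} The appendix's displayed BSDE \eqref{eq:app_bsde} orders the arguments of $\hat f$ differently from \eqref{eq:score_induced_driver}. I would state explicitly that $G$ is defined to coincide with $f_\theta$ as in \eqref{eq:score_induced_driver}, so no additional structure on $\hat f$ or $s_\theta$ is needed.
\end{itemize}

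Optionally, I would add a remark giving a sufficient condition for the hypothesis. If $\hat f(t,\cdot,\cdot)$ is Lipschitz with constants $L_s$ in its score argument and $L_z$ in $z$, and $s_\theta(\cdot,t)$ is $L_\theta$-Lipschitz uniformly in $t$ (as for networks with Lipschitz activations and bounded weights), then the triangle inequality gives the bound with $L_y=L_sL_\theta$. This is not required for the lemma, which follows directly from the assumption.
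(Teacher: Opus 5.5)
Your proposal is correct and follows the paper's proof, which simply says the lemma is exactly Assumption~\ref{ass:bsde_lip}; your observation that $f_\theta(t,x,y,z)=\hat f(t,s_\theta(y,t),z)$ does not depend on $x$, so $G(t,y,z)=f_\theta(t,x,y,z)$ for any fixed $x$, just makes that identification explicit. Your bookkeeping points are accurate and harmless: the appendix's $T$ should be read as the inversion horizon $\tau$, and the arguments of $\hat f$ appear in a different order in \eqref{eq:app_bsde}.
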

This is exactly Assumption~\ref{ass:bsde_lip}.

\begin{lemma}[Square-integrability at the origin]
\label{lem:app_origin_H2}
Under Assumption~\ref{ass:bsde_lip},
\[
\mathbb{E}\int_0^T \|G(t,0,0)\|^2 dt < \infty.
\]
\end{lemma}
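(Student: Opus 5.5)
The plan is to read the claim directly off the integrability part of Assumption~\ref{ass:bsde_lip}, namely \eqref{eq:driver_integrable}, by identifying $G(t,0,0)$ with $f_\theta(t,X_t,0,0)$. By the definition of the score-induced driver \eqref{eq:score_induced_driver}, we have $f_\theta(t,x,y,z)=\hat f(t,s_\theta(y,t),z)$. This quantity does not depend on the argument $x$. Hence, for every $t$, every $\omega$, and every value of $X_t(\omega)$,
\[
f_\theta(t,X_t,0,0)=\hat f\big(t,s_\theta(0,t),0\big)=G(t,0,0).
\]
So $G(t,0,0)$ is a deterministic (in particular progressively measurable) function of $t$. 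It coincides pathwise with the random integrand appearing in \eqref{eq:driver_integrable}.

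The key steps in order are as follows.
\begin{enumerate}
\item Record the pointwise identity above, using that the $x$-slot of $f_\theta$ is a dummy variable.
\item Note that measurability in $t$ is inherited from $\hat f$ and $s_\theta$ being measurable (as stated after \eqref{eq:score_induced_driver}).
\item Substitute into \eqref{eq:driver_integrable}:
\[
\mathbb{E}\int_0^\tau\|G(t,0,0)\|^2\,dt=\int_0^\tau\|G(t,0,0)\|^2\,dt=\mathbb{E}\int_0^\tau\|f_\theta(t,X_t,0,0)\|^2\,dt<\infty .
\]
\end{enumerate}

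The one point needing care is the horizon. The appendix writes $[0,T]$, while Assumption~\ref{ass:bsde_lip} is stated on $[0,\tau]$. I would handle this by working on the BSDE horizon throughout, i.e.\ reading $T$ in the appendix as the inversion horizon $\tau$ of Definition~\ref{def:associated_bsde}, which is the interval on which \eqref{eq:app_bsde} is posed. If one insists on the full $[0,T]$, the assumption would have to be read with $\tau=T$.

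This bookkeeping is the only real obstacle. Everything else is an identification of integrands, with no estimates required.
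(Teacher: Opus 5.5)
Your proposal is correct and takes the same route as the paper, whose proof consists of the single remark that the claim is the integrability condition \eqref{eq:driver_integrable} in Assumption~\ref{ass:bsde_lip}. Your explicit identification $f_\theta(t,X_t,0,0)=\hat f(t,s_\theta(0,t),0)=G(t,0,0)$, using that $f_\theta$ in \eqref{eq:score_induced_driver} ignores its $x$-argument, and your reading of the appendix horizon $T$ as $\tau$ simply make precise what the paper leaves implicit.
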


This is the integrability condition in Assumption~\ref{ass:bsde_lip}.

Hence, under the assumptions of Theorem~\ref{thm:wellposed_associated} and Proposition~\ref{ass:bsde_lip},
$G$ is uniformly Lipschitz in $(y,z)$ (Lemma~\ref{lem:app_driver_lip}) and satisfies the standard
square-integrability condition at the origin (Lemma~\ref{lem:app_origin_H2}).

\subsection{Step B: Linear BSDE solvability via martingale representation}

\begin{lemma}[Linear BSDE (construction)]
\label{lem:app_linear}
Let $\xi\in L^2(\Omega;\mathbb{R}^d)$ and let $g\in L^2(\Omega\times[0,T];\mathbb{R}^d)$ be progressively measurable.
Then there exists a unique pair $(Y,Z)\in\mathcal S^2\times\mathcal H^2$ such that
\begin{equation}
\label{eq:app_linear}
Y_t=\xi+\int_t^T g(s)\,ds-\int_t^T Z_s\,dW_s,\qquad t\in[0,T].
\end{equation}
\end{lemma}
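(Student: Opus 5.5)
We construct the solution explicitly via the martingale representation theorem for the Brownian filtration. We assume, as is standard in the Pardoux--Peng setting, that $(\mathcal F_t)$ is the augmented filtration generated by $W$. The candidate is
\[
Y_t := \mathbb{E}\Big[\xi+\int_t^T g(s)\,ds \,\Big|\, \mathcal F_t\Big],\qquad t\in[0,T].
\]

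\textbf{Existence.} First, $\xi+\int_0^T g(s)\,ds\in L^2(\Omega;\mathbb R^d)$. This follows from $\xi\in L^2$ and Cauchy--Schwarz,
\[
\mathbb E\Big\|\int_0^T g\,ds\Big\|^2\le T\,\mathbb E\int_0^T\|g\|^2ds<\infty .
\]
Define the square-integrable martingale $M_t:=\mathbb E[\xi+\int_0^T g(s)\,ds\mid\mathcal F_t]$. By the martingale representation theorem, there is a unique progressively measurable $Z\in\mathcal H^2$ such that
\[
M_t=M_0+\int_0^t Z_s\,dW_s .
\]
This gives $M$ a continuous version. Because $\int_0^t g\,ds$ is $\mathcal F_t$-measurable, we have $Y_t=M_t-\int_0^t g(s)\,ds$, so $Y$ is adapted and continuous. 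Then
\[
Y_t-\xi = M_t-M_T+\int_t^T g\,ds = \int_t^T g\,ds-\int_t^T Z_s\,dW_s,
\]
which is exactly \eqref{eq:app_linear}. For $Y\in\mathcal S^2$, Doob's $L^2$ maximal inequality gives $\mathbb E\sup_t\|M_t\|^2\le 4\,\mathbb E\|M_T\|^2<\infty$. In addition, $\sup_t\|\int_0^t g\,ds\|^2\le T\int_0^T\|g\|^2ds$ is integrable. Together these bound $\mathbb E\sup_t\|Y_t\|^2$.

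\textbf{Uniqueness.} Suppose $(Y,Z)$ and $(Y',Z')$ are two solutions in $\mathcal S^2\times\mathcal H^2$, and set $\delta Y=Y-Y'$, $\delta Z=Z-Z'$. Subtracting the two equations gives
\[
\delta Y_t=-\int_t^T\delta Z_s\,dW_s .
\]
Since $\delta Z\in\mathcal H^2$, the stochastic integral is a true martingale. Taking $\mathbb E[\cdot\mid\mathcal F_t]$ and using adaptedness of $\delta Y_t$ yields $\delta Y_t=0$ a.s. for each $t$. Continuity then gives indistinguishability. Substituting back, $\int_t^T\delta Z\,dW=0$, and the It\^o isometry gives $\mathbb E\int_t^T\|\delta Z_s\|^2ds=0$, so $\delta Z=0$ in $\mathcal H^2$.

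\textbf{Main obstacle.} The only non-routine ingredient is the martingale representation step. It requires the filtration to be the (augmented) Brownian filtration; for a general filtration an orthogonal martingale term would appear. We will state this standing assumption explicitly, consistent with the classical setting \cite{PARDOUX199055}, and apply the representation componentwise to the $\mathbb R^d$-valued martingale to obtain the $\mathbb R^{d\times d}$-valued integrand $Z$.
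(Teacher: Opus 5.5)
Your proposal is correct and follows essentially the same route as the paper: you represent the square-integrable martingale $M_t=\mathbb{E}[\xi+\int_0^T g(s)\,ds\mid\mathcal F_t]$ via the martingale representation theorem and set $Y_t=M_t-\int_0^t g(s)\,ds$. Uniqueness then comes, as in the paper, from the difference of two solutions being a martingale with zero terminal value and hence zero integrand; your extra details fill in steps the paper leaves implicit, namely Doob's maximal inequality for $Y\in\mathcal S^2$, the explicit requirement that $(\mathcal F_t)$ be the augmented Brownian filtration, and the implicit $\mathcal F_T$-measurability of $\xi$ needed so that $M_T=\xi+\int_0^T g\,ds$.
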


\begin{proof}
Define $U:=\xi+\int_0^T g(s)\,ds\in L^2(\Omega;\mathbb{R}^d)$.
Let $M_t:=\mathbb{E}[U\mid\mathcal F_t]$, then $(M_t)$ is a square-integrable martingale.
By the martingale representation theorem, there exists a unique $Z\in\mathcal H^2$ such that
$M_t=M_0+\int_0^t Z_s\,dW_s$.
Set $Y_t:=M_t-\int_0^t g(s)\,ds$. Rearranging yields \eqref{eq:app_linear}.
Uniqueness follows by subtracting two solutions: the difference has $\xi=0$ and $g\equiv 0$,
so its martingale representation forces $Z\equiv 0$ and hence $Y\equiv 0$.
\end{proof}

\subsection{Step C: Uniqueness for the nonlinear BSDE (It\^o energy estimate)}

\begin{proposition}[Uniqueness in $\mathcal S^2\times\mathcal H^2$]
\label{prop:app_unique}
Assume $\xi\in L^2$ and $G$ satisfies the uniform Lipschitz condition in Lemma~\ref{lem:app_driver_lip}.
Then \eqref{eq:app_bsde} has at most one solution in $\mathcal S^2\times\mathcal H^2$.
\end{proposition}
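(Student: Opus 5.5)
Take two solutions $(Y,Z),(Y',Z')\in\mathcal S^2\times\mathcal H^2$ of \eqref{eq:app_bsde} and set $\delta Y:=Y-Y'$, $\delta Z:=Z-Z'$, $\delta G_s:=G(s,Y_s,Z_s)-G(s,Y'_s,Z'_s)$. Subtracting the two integral equations cancels the common terminal value $\xi$, so
\[
\delta Y_t=\int_t^T \delta G_s\,ds-\int_t^T \delta Z_s\,dW_s,\qquad \delta Y_T=0 .
\]
The plan is a weighted It\^o energy estimate on $e^{\beta t}\|\delta Y_t\|^2$, with $\beta$ chosen large in terms of $L_y,L_z$, so that the Lipschitz bound of Lemma~\ref{lem:app_driver_lip} is absorbed by the dissipative terms.

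Applying It\^o's formula to $e^{\beta t}\|\delta Y_t\|^2$ between $t$ and $T$ gives
\[
e^{\beta t}\|\delta Y_t\|^2+\int_t^T e^{\beta s}\big(\beta\|\delta Y_s\|^2+\|\delta Z_s\|^2\big)ds
=\int_t^T 2e^{\beta s}\langle \delta Y_s,\delta G_s\rangle ds-\int_t^T 2e^{\beta s}\langle \delta Y_s,\delta Z_s\,dW_s\rangle .
\]
By Lemma~\ref{lem:app_driver_lip} and Young's inequality,
\[
2\langle \delta Y,\delta G\rangle\le (2L_y+2L_z^2)\|\delta Y\|^2+\tfrac12\|\delta Z\|^2 .
\]
Choosing $\beta\ge 2L_y+2L_z^2+1$ leaves, after rearranging, a positive multiple of $\|\delta Y\|^2$ and $\tfrac12\|\delta Z\|^2$ on the left.

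The step needing care is showing that the stochastic integral $N_t:=\int_0^t e^{\beta s}\langle\delta Y_s,\delta Z_s\,dW_s\rangle$ has zero expectation, since $\delta Z\in\mathcal H^2$ alone does not make it a true martingale. I will use the $\mathcal S^2$ membership. By Burkholder--Davis--Gundy,
\[
\mathbb E\sup_t|N_t|\le C\,\mathbb E\Big(\int_0^T\|\delta Y_s\|^2\|\delta Z_s\|^2ds\Big)^{1/2}\le C\,\mathbb E\Big[\sup_s\|\delta Y_s\|\Big(\int_0^T\|\delta Z_s\|^2ds\Big)^{1/2}\Big].
\]
Cauchy--Schwarz bounds the right-hand side by $C\|\delta Y\|_{\mathcal S^2}\|\delta Z\|_{\mathcal H^2}<\infty$. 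Hence $N$ is a uniformly integrable martingale and its increments have zero mean. As a fallback, one can localize with stopping times and pass to the limit by Fatou and dominated convergence, using the same integrability.

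Taking expectations then yields
\[
\mathbb E\,e^{\beta t}\|\delta Y_t\|^2+\tfrac12\,\mathbb E\int_t^T e^{\beta s}\|\delta Z_s\|^2ds\le 0\qquad\text{for all } t .
\]
So $\delta Y_t=0$ a.s. for each $t$, and $\delta Z=0$ in $\mathcal H^2$, i.e.\ $dt\otimes d\mathbb P$-a.e. Since $Y$ and $Y'$ are continuous, $\delta Y_t=0$ for all $t$ simultaneously holds on a single full-measure set, by taking rational $t$ and passing to the limit by continuity. This proves that \eqref{eq:app_bsde} has at most one solution in $\mathcal S^2\times\mathcal H^2$. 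The integrability assumption at the origin (Lemma~\ref{lem:app_origin_H2}) is not needed here, because it cancels in the difference.
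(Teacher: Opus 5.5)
Your proof is correct. It follows the same overall strategy as the paper, an It\^o energy identity for the difference of two solutions, but it differs in two details, and both matter.

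\textbf{Handling of the $\Delta Z$ term.} The paper applies It\^o's formula to the unweighted $\|\Delta Y_t\|^2$. It then uses $2\langle \Delta Y,\Delta G\rangle\le\|\Delta Y\|^2+\|\Delta G\|^2$ together with $\|\Delta G\|^2\le 2L_G^2(\|\Delta Y\|^2+\|\Delta Z\|^2)$, and closes with Gr\"onwall. As written, that step leaves $2L_G^2\,\mathbb E\int_t^T\|\Delta Z_s\|^2ds$ on the right-hand side. This term cancels against the left-hand $\mathbb E\int_t^T\|\Delta Z_s\|^2ds$ only when $2L_G^2\le 1$; dropping the nonnegative left-hand term does not remove it. Your $L_z$-weighted split, $2L_z\|\delta Y\|\|\delta Z\|\le 2L_z^2\|\delta Y\|^2+\tfrac12\|\delta Z\|^2$, is what makes the estimate close for arbitrary Lipschitz constants.

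\textbf{The stochastic integral.} You justify that the expected stochastic integral vanishes, via BDG and membership in $\mathcal S^2\times\mathcal H^2$. The paper only asserts this.

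\textbf{The exponential weight.} This is a convenience, not a necessity. With your Young split, the unweighted identity already gives $\mathbb E\|\delta Y_t\|^2+\tfrac12\mathbb E\int_t^T\|\delta Z_s\|^2ds\le(2L_y+2L_z^2)\int_t^T\mathbb E\|\delta Y_s\|^2ds$, and Gr\"onwall finishes as in the paper. The weight lets you skip Gr\"onwall and send $\delta Y$ and $\delta Z$ to zero in one inequality. It also matches the weighted norm the paper uses in its existence step.

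Your closing step, passing from $\delta Y_t=0$ a.s.\ for each $t$ to indistinguishability via continuity, is another point the paper glosses over.
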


\begin{proof}
Let $(Y,Z)$ and $(\tilde Y,\tilde Z)$ be two solutions. Set
$\Delta Y:=Y-\tilde Y$, $\Delta Z:=Z-\tilde Z$, and
$\Delta G(t):=G(t,Y_t,Z_t)-G(t,\tilde Y_t,\tilde Z_t)$.
Then
\[
\Delta Y_t=\int_t^T \Delta G(s)\,ds-\int_t^T \Delta Z_s\,dW_s.
\]
Apply It\^o's formula to $\|\Delta Y_t\|^2$ on $[t,T]$:
\[
\|\Delta Y_t\|^2+\int_t^T\|\Delta Z_s\|^2ds
=2\int_t^T\langle \Delta Y_s,\Delta G(s)\rangle ds
-2\int_t^T\langle \Delta Y_s,\Delta Z_s\,dW_s\rangle.
\]
Take expectation; the stochastic integral has zero expectation:
\[
\mathbb{E}\|\Delta Y_t\|^2+\mathbb{E}\int_t^T\|\Delta Z_s\|^2ds
=2\mathbb{E}\int_t^T\langle \Delta Y_s,\Delta G(s)\rangle ds.
\]
Using Cauchy--Schwarz and Young's inequality with $\varepsilon=1$,
\[
2\langle a,b\rangle \le \|a\|^2+\|b\|^2.
\]
By Lipschitzness (Lemma~\ref{lem:app_driver_lip}),
$\|\Delta G(s)\|\le L_G(\|\Delta Y_s\|+\|\Delta Z_s\|)$, hence
$\|\Delta G(s)\|^2\le 2L_G^2(\|\Delta Y_s\|^2+\|\Delta Z_s\|^2)$.
Therefore,
\[
\mathbb{E}\|\Delta Y_t\|^2
\le (1+2L_G^2)\int_t^T \mathbb{E}\|\Delta Y_s\|^2 ds,
\]
where we dropped the nonnegative $\mathbb{E}\int_t^T\|\Delta Z_s\|^2ds$ term on the left.
By Gr\"onwall's inequality, $\mathbb{E}\|\Delta Y_t\|^2=0$ for all $t$, hence $\Delta Y\equiv 0$ a.s.
Plugging back gives $\mathbb{E}\int_0^T\|\Delta Z_s\|^2ds=0$, so $\Delta Z\equiv 0$.
\end{proof}

\subsection{Step D: Existence via Picard iteration (Pardoux--Peng proof route \cite{PARDOUX199055,pengwellposed})}

\begin{theorem}[Existence and uniqueness (detailed proof of Theorem~\ref{thm:wellposed_associated})]
\label{thm:app_wellposed}
Assume:
(i) $\xi\in L^2(\Omega;\mathbb{R}^d)$;
(ii) $\mathbb{E}\int_0^T \|G(t,0,0)\|^2dt<\infty$ (e.g., Lemma~\ref{lem:app_origin_H2});
(iii) $G$ is uniformly Lipschitz in $(y,z)$ (Lemma~\ref{lem:app_driver_lip}).
Then \eqref{eq:app_bsde} admits a unique adapted solution $(Y,Z)\in\mathcal S^2\times\mathcal H^2$.
\end{theorem}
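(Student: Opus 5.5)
Uniqueness is already settled by Proposition~\ref{prop:app_unique}, so what remains is existence. I would get it by a Picard fixed-point argument on the Banach space $\mathcal B:=\mathcal S^2\times\mathcal H^2$. I will equip $\mathcal B$ with the weighted norm
\[
\|(Y,Z)\|_\beta^2:=\mathbb E\int_0^T e^{\beta t}\big(\|Y_t\|^2+\|Z_t\|^2\big)\,dt ,
\]
where $\beta>0$ is chosen later. For fixed $\beta$ this norm is equivalent on $\mathcal H^2\times\mathcal H^2$ to the unweighted one.

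\textbf{Definition of the map.} Given $(U,V)\in\mathcal H^2\times\mathcal H^2$, set $g(s):=G(s,U_s,V_s)$. By the Lipschitz bound (Lemma~\ref{lem:app_driver_lip}) we have
\[
\|g(s)\|\le \|G(s,0,0)\|+L_y\|U_s\|+L_z\|V_s\| ,
\]
and Lemma~\ref{lem:app_origin_H2} then puts $g$ in $L^2(\Omega\times[0,T])$; it is progressively measurable. Lemma~\ref{lem:app_linear} therefore gives a unique $(Y,Z)\in\mathcal S^2\times\mathcal H^2$ solving the linear BSDE with driver $g$ and terminal value $\xi$. I define $\Phi(U,V):=(Y,Z)$. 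A fixed point of $\Phi$ is exactly a solution of \eqref{eq:app_bsde}.

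\textbf{Contraction estimate.} Take two inputs and write $\Delta Y,\Delta Z,\Delta U,\Delta V$ for the differences and $\Delta g$ for the driver difference. Apply It\^o's formula to $e^{\beta t}\|\Delta Y_t\|^2$ on $[t,T]$. The terminal difference vanishes, and the local martingale term has zero expectation; this needs a localization argument, using $\Delta Y\in\mathcal S^2$, $\Delta Z\in\mathcal H^2$ and the BDG inequality. The result is
\[
\mathbb E\int_0^T e^{\beta s}\big(\beta\|\Delta Y_s\|^2+\|\Delta Z_s\|^2\big)ds
= 2\,\mathbb E\int_0^T e^{\beta s}\langle\Delta Y_s,\Delta g(s)\rangle ds .
\]
I bound the right side by Young's inequality: $2\langle a,b\rangle\le\beta\|a\|^2+\beta^{-1}\|b\|^2$. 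Using $\|\Delta g\|^2\le 2\max(L_y,L_z)^2(\|\Delta U\|^2+\|\Delta V\|^2)$ and cancelling the $\beta\|\Delta Y\|^2$ terms, I get
\[
\mathbb E\int e^{\beta s}\|\Delta Z_s\|^2ds\le \tfrac{2L^2}{\beta}\|(\Delta U,\Delta V)\|_\beta^2 .
\]
To control $\Delta Y$ as well, I will instead use $2\langle a,b\rangle\le \tfrac{\beta}{2}\|a\|^2+\tfrac{2}{\beta}\|b\|^2$. This yields
\[
\tfrac{\beta}{2}\,\mathbb E\int e^{\beta s}\|\Delta Y_s\|^2ds+\mathbb E\int e^{\beta s}\|\Delta Z_s\|^2ds\le \tfrac{4L^2}{\beta}\|(\Delta U,\Delta V)\|_\beta^2 .
\]
With $\beta\ge 2$ the left side dominates $\|(\Delta Y,\Delta Z)\|_\beta^2$. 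Taking $\beta>\max(2,8L^2)$ then makes $\Phi$ a strict contraction with constant at most $1/2$.

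\textbf{Conclusion and main obstacle.} Banach's fixed point theorem gives a unique fixed point in $\mathcal H^2\times\mathcal H^2$. Since $\Phi$ maps into $\mathcal S^2\times\mathcal H^2$ by Lemma~\ref{lem:app_linear}, the fixed point lies in $\mathcal S^2\times\mathcal H^2$. Its continuity and the bound $\mathbb E\sup_t\|Y_t\|^2<\infty$ can also be verified directly from the BSDE via Doob's and BDG's inequalities. Uniqueness in $\mathcal S^2\times\mathcal H^2$ is Proposition~\ref{prop:app_unique}. The step I expect to need the most care is the weighted It\^o estimate: the martingale term must genuinely be a true martingale, and $\beta$ must be tuned so that both $\Delta Y$ and $\Delta Z$ are controlled. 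If it proved awkward, a fallback is to prove contraction on a short interval $[T-\delta,T]$, with $\delta$ depending only on $L$, and then patch the solutions backward over finitely many intervals.
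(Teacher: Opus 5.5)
Your proposal is correct and follows essentially the same route as the paper. You linearize via Lemma~\ref{lem:app_linear}, apply It\^o's formula to $e^{\beta t}\|\Delta Y_t\|^2$ with Young's weights $\beta/2$ and $2/\beta$ to get a contraction in the exponentially weighted $L^2$ norm, and cite Proposition~\ref{prop:app_unique} for uniqueness; your Banach fixed-point framing is just the paper's Picard iteration from $(0,0)$ stated differently. Your extra care is warranted: the paper's condition $\beta>4L_G^2$ alone does not make $\tfrac{\beta}{2}\|\Delta Y\|^2$ dominate $\|\Delta Y\|^2$ when $L_G$ is small, so your requirement $\beta\ge 2$ fills a small gap the paper glosses over, as do your justification that the stochastic integral is a true martingale and your observation that $\mathcal S^2$ membership comes for free from the range of your map $\Phi$.
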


\begin{proof}
{(1) Picard scheme (linearization).}
Initialize $(Y^0,Z^0)\equiv(0,0)$.
Given $(Y^n,Z^n)$, define the progressively measurable process
\[
g^{n+1}(t):=G(t,Y^n_t,Z^n_t).
\]
We claim $g^{n+1}\in L^2(\Omega\times[0,T])$.
Indeed, by Lipschitzness,
\[
\|G(t,y,z)\|\le \|G(t,0,0)\|+L_G(\|y\|+\|z\|),
\]
hence with $(a+b+c)^2\le 3(a^2+b^2+c^2)$,
\[
\|g^{n+1}(t)\|^2
\le 3\|G(t,0,0)\|^2+3L_G^2\|Y^n_t\|^2+3L_G^2\|Z^n_t\|^2.
\]
Integrating and taking expectation gives finiteness because
$\mathbb{E}\int_0^T\|G(t,0,0)\|^2dt<\infty$ and (inductively) $(Y^n,Z^n)\in\mathcal S^2\times\mathcal H^2$
implies $\mathbb{E}\int_0^T\|Y^n_t\|^2dt\le T\,\mathbb{E}\sup_{t}\|Y^n_t\|^2<\infty$ and
$\mathbb{E}\int_0^T\|Z^n_t\|^2dt<\infty$.

Therefore, by Lemma~\ref{lem:app_linear}, there exists a unique
$(Y^{n+1},Z^{n+1})\in\mathcal S^2\times\mathcal H^2$ solving the linear BSDE
\begin{equation}
\label{eq:app_picard}
Y^{n+1}_t=\xi+\int_t^T g^{n+1}(s)\,ds-\int_t^T Z^{n+1}_s\,dW_s.
\end{equation}

{(2) Contraction estimate in an exponentially weighted norm.}

Let $\Delta Y^{n+1}:=Y^{n+1}-Y^n$, $\Delta Z^{n+1}:=Z^{n+1}-Z^n$.
Subtract \eqref{eq:app_picard} at levels $n+1$ and $n$ to get
\[
\Delta Y^{n+1}_t=\int_t^T \Delta g^{n+1}(s)\,ds-\int_t^T \Delta Z^{n+1}_s\,dW_s,
\quad
\Delta g^{n+1}(t):=g^{n+1}(t)-g^{n}(t).
\]
Fix $\beta>0$ and apply It\^o's formula to $e^{\beta t}\|\Delta Y^{n+1}_t\|^2$:
\begin{align*}
&e^{\beta t}\mathbb E\|\Delta Y^{n+1}_t\|^2
+\mathbb E\int_t^T e^{\beta s}\Big(\beta\|\Delta Y^{n+1}_s\|^2+\|\Delta Z^{n+1}_s\|^2\Big)\,ds \\
&\qquad=2\mathbb E\int_t^T e^{\beta s}\langle \Delta Y^{n+1}_s,\Delta g^{n+1}(s)\rangle ds,
\end{align*}
where the stochastic integral vanishes after expectation.
Using Young's inequality,
\[
2\langle a,b\rangle\le \frac{\beta}{2}\|a\|^2+\frac{2}{\beta}\|b\|^2,
\]
we obtain
\[
\mathbb E\int_t^T e^{\beta s}\Big(\tfrac{\beta}{2}\|\Delta Y^{n+1}_s\|^2+\|\Delta Z^{n+1}_s\|^2\Big)\,ds
\le \frac{2}{\beta}\mathbb E\int_t^T e^{\beta s}\|\Delta g^{n+1}(s)\|^2ds.
\]
By Lipschitzness of $G$,
\[
\|\Delta g^{n+1}(s)\|
=\|G(s,Y^n_s,Z^n_s)-G(s,Y^{n-1}_s,Z^{n-1}_s)\|
\le L_G(\|\Delta Y^{n}_s\|+\|\Delta Z^{n}_s\|),
\]
hence $\|\Delta g^{n+1}(s)\|^2\le 2L_G^2(\|\Delta Y^{n}_s\|^2+\|\Delta Z^{n}_s\|^2)$.
Define the weighted energy norm
\[
\|(U,V)\|_{\beta,t}^2:=\mathbb E\int_t^T e^{\beta s}\big(\|U_s\|^2+\|V_s\|^2\big)\,ds.
\]
The above inequalities yield
\[
\|(\Delta Y^{n+1},\Delta Z^{n+1})\|_{\beta,t}^2
\le \frac{4L_G^2}{\beta}\|(\Delta Y^{n},\Delta Z^{n})\|_{\beta,t}^2.
\]
Choose $\beta>4L_G^2$ and set $q:=\frac{4L_G^2}{\beta}\in(0,1)$.
Then $\|(\Delta Y^{n+1},\Delta Z^{n+1})\|_{\beta,0}^2 \le q\,\|(\Delta Y^{n},\Delta Z^{n})\|_{\beta,0}^2$,
so $(Y^n,Z^n)$ is Cauchy under $\|\cdot\|_{\beta,0}$ and thus converges to some $(Y,Z)$ in that norm.

\noindent\textbf{(3) Passing to the limit and verification.}
Since $G$ is Lipschitz and $(Y^n,Z^n)\to(Y,Z)$ in the weighted $L^2$ sense,
it follows that $G(\cdot,Y^n,Z^n)\to G(\cdot,Y,Z)$ in $L^2(\Omega\times[0,T])$.
Letting $n\to\infty$ in \eqref{eq:app_picard} yields that $(Y,Z)$ satisfies \eqref{eq:app_bsde}.

\noindent\textbf{(4) Membership in $\mathcal S^2\times\mathcal H^2$.}
From \eqref{eq:app_bsde} and the square-integrability of $\xi$ and $G(\cdot,0,0)$, standard It\^o/BDG
a priori estimates imply $\mathbb E\sup_{t\in[0,T]}\|Y_t\|^2+\mathbb E\int_0^T\|Z_t\|^2dt<\infty$,
hence $(Y,Z)\in\mathcal S^2\times\mathcal H^2$.

Thus, uniqueness follows from Proposition~\ref{prop:app_unique}.
\end{proof}

\noindent

Theorem~\ref{thm:wellposed_associated} in the main text is exactly Theorem~\ref{thm:app_wellposed} above,
applied to $G(t,y,z)=\hat f(t,s_\theta(y,t),z)$ with the driver/score conditions in Proposition~\ref{ass:bsde_lip}.

\section{Key Implementations}
\subsection{Backward Stochastic Differential Equation for Diffusion}

We consider the case used in our example that the Forward Diffusion model defined through:
\begin{equation}
\underbrace{Y_t}_{\text{Returned Value $Y_t$ in BSDE()}}  = 
\underbrace{\xi}_{\text{Target Xi()}}
+ \int_t^T  \underbrace{(\sigma^2u)S_\theta(Y_u, u) }_{\text{Driver f()}} ds
- \int_t^T \underbrace{Z_s}_{\text{Solver network in BSDE(). }} \underbrace{d\brwonian_s}_{\text{The dw in BSDE()}}
\end{equation}

where $\hat{f}=(\sigma^2u)S_\theta(Y_u, u)$. The Python implementation of above BSDE-based Diffusion is:
\begin{pythoncode}
# Driver Func
def f(t, y, z, score_model, config):

    t = torch.ones(config.solver_batch_size, device=device) * t

    score_model.eval()
    model_output = score_model(y, t)
    #time schedule
    alphas = 1.0 - betas 
    alphas_cumprod = alphas.cumprod(dim=0)
    weighted_score = betas / torch.sqrt(1 - alphas_cumprod)
    
    #Extract coefficients based on t
    mean = extract(1 / torch.sqrt(alphas), t, y.shape) * 
    (- extract(weighted_score, t, y.shape) * model_output)

    return -mean
    
# Target Func
def Xi(target):
    # Target Y_T
    return torch.Tensor(target).to(device)
    
# Integral Func
def BSDE(batch_size, N):
    
    delta_t = T / N
    W = torch.randn(batch_size, dim_d, N, device=device) 
        * np.sqrt(delta_t) # Brownian
    t = torch.ones(batch_size, device=device)

    y = y_0 * marginal_prob_std(t)[:, None, None, None] +
        torch.zeros(W.size()[0],dim_y,device=device)
        
    # Integral over t
    for i in range(N):
        z = PHI(i, delta_t) # Solver Net
        dw = torch.randn_like(z)  * np.sqrt(delta_t)

        t = torch.ones(batch_size, device=device) * i * delta_t
        g = (z * diffusion_coeff_fn(t)[:, None, None, None])
        rdm = (dw * g).reshape(-1, dim_y)
        y_t = y = y - f(delta_t*i, x, y, z)*delta_t + rdm
    return y_t
\end{pythoncode}

In the aforementioned code fragment, the python function \texttt{f(t, y, z, score\_model, config)} aligns with the driver function explicated in Eq. \ref{eq:bsde}, while the function \texttt{Xi(target)} represents the target $Y_T = \xi$. The function \texttt{BSDE(batch\_size, N)} integrates the BSDE from $t$ to $T$ across $N$ steps of discretization.

We can resolve the presented BSDE by employing a numerical solver as delineated in the manuscript.

\section{Algorithms}
\label{app:forward_algorithms}
Once the backward BSDE is solved, we obtain an initial state $Y_0$ and an adapted
control process $(Z_t)_{t\in[0,T]}$ (or an estimator thereof). We then construct controlled {forward}
samplers by perturbing $Y_0$ (\textit{$Y_0$-neighborhood sampling}).

\noindent\textbf{Forward-time form.}
From \eqref{eq:bsde}, the corresponding forward-time stochastic dynamics read
\begin{equation}
\label{eq:forward_sde_from_bsde}
dY_t = -\,\hat f\!\big(s_\theta(Y_t,t),\,Z_t,\,t\big)\,dt \;+\; Z_t\,dW_t,\qquad Y_0\ \text{given}.
\end{equation}
All forward samplers below are Euler--Maruyama discretizations of \eqref{eq:forward_sde_from_bsde}.

In practice, we use a learned estimator $\Phi$ (trained by our BSDE solver) to approximate the control,
e.g., $Z_{t_i}\approx \Phi(t_i, W_{t_i};\beta)\in\mathbb R^{d\times d}$. 
\subsection*{Algorithm~\ref{algo:forwardsampling_fix}: $Y_0$-neighborhood sampling}

\noindent
Given a reference initial state $Y_0$ inferred by solving the backward BSDE for a terminal datum
$\xi$, we generate diversity by sampling perturbed initial conditions $\tilde Y_0 = Y_0 + \lambda\varepsilon$,
then propagating each $\tilde Y_0$ forward according to \eqref{eq:forward_sde_from_bsde}. The parameter
$\lambda\ge 0$ controls the exploration radius around $Y_0$.

\begin{algorithm}
\caption{Conditional generation via $Y_0$-neighborhood sampling (forward Euler--Maruyama)}
\label{algo:forwardsampling_fix}
\DontPrintSemicolon
\SetKwInput{KwInput}{Input}
\SetKwInput{KwOutput}{Output}
\KwInput{Initial state $Y_0$, $Z$-estimator $\Phi(\cdot;\beta)$, steps $N$, score $s_\theta$, driver $\hat f$, scale $\lambda$}
\KwOutput{Terminal sample $\tilde Y_T$}
\BlankLine
$\Delta t \leftarrow T/N$\;
$\varepsilon_0 \sim \mathcal N(0,I_d)$\;
$y \leftarrow Y_0 + \lambda\,\varepsilon_0$\;
$W \leftarrow 0$\;
\For{$i=0$ \KwTo $N-1$}{
    $t_i \leftarrow i\Delta t$\;
    $\varepsilon_i \sim \mathcal N(0,I_d)$,\;\; $\Delta W_i \leftarrow \sqrt{\Delta t}\,\varepsilon_i$\;
    $W \leftarrow W + \Delta W_i$\;
    $Z_i \leftarrow \Phi(t_i, W;\beta)$\tcp*{$Z_i \approx Z_{t_i}\in\mathbb R^{d\times d}$}
    $b_i \leftarrow -\,\hat f\!\big(s_\theta(y,t_i),\,Z_i,\,t_i\big)$\tcp*{drift of \eqref{eq:forward_sde_from_bsde}}
    $y \leftarrow y + b_i\,\Delta t + Z_i\,\Delta W_i$\;
}
\Return{$y$}\;
\end{algorithm}

\subsection*{Algorithm~\ref{algo:UQ_fix}: Monte Carlo uncertainty quantification}

\noindent
To quantify uncertainty in conditional generation, we repeatedly run a chosen forward sampler
(e.g., Algorithm~\ref{algo:forwardsampling_fix} with
independent randomness (Brownian increments and/or $Y_0$ perturbations) to obtain an empirical distribution
of terminal samples. We report the sample mean and covariance.

\begin{algorithm}
\caption{Monte Carlo uncertainty quantification for terminal samples}
\label{algo:UQ_fix}
\DontPrintSemicolon
\SetKwInput{KwInput}{Input}
\SetKwInput{KwOutput}{Output}
\KwInput{Sampler $\mathcal A$ (Algorithm~\ref{algo:forwardsampling_fix}), number of runs $K$}
\KwOutput{Sample mean $\bar y$, sample covariance $\widehat{\mathrm{Cov}}$}
\BlankLine
\For{$k=1$ \KwTo $K$}{
    $y^{(k)} \leftarrow \mathcal A()$\tcp*{independent random seed / Brownian path}
}
$\bar y \leftarrow \frac{1}{K}\sum_{k=1}^K y^{(k)}$\;
$\widehat{\mathrm{Cov}} \leftarrow \frac{1}{K-1}\sum_{k=1}^K (y^{(k)}-\bar y)(y^{(k)}-\bar y)^\top$\;
\Return{$\bar y,\widehat{\mathrm{Cov}}$}\;
\end{algorithm}

\section{Additional Qualitative Evaluation}
\subsection{SVCT Reconstruction}
\label{app:ldctsamples}

\paragraph{Additional qualitative comparisons.}
Fig.~\ref{fig:appldct} reports additional SVCT reconstruction examples across diverse anatomical
slices. Consistent trends can be observed.
\emph{FBP} exhibits severe streaking and noise amplification, especially around high-attenuation structures, which
obscures low-contrast tissue boundaries. \emph{TV} effectively suppresses noise but introduces noticeable
over-smoothing and loss of fine textures (e.g., lung parenchyma and soft-tissue interfaces), together with bias in
local contrast. \emph{TVDIP} further reduces streak artifacts compared to TV, but still tends to blur anatomical edges
and dampen subtle structures, suggesting a trade-off between artifact removal and detail preservation.

Among diffusion-based baselines, SiTCoM, PnPDM, DPS, and DDS further illustrate different behaviors under
step-wise conditioning. \emph{SiTCoM} can recover sharper local structures than TV/TVDIP, but its intensity is less
stable across cases and it still exhibits noticeable streak residuals and display-window sensitivity. \emph{PnPDM} is
visually smoother and reduces streaking, but it often suppresses high-frequency textures and weak edges, yielding an
over-regularized appearance and attenuated structural contrast. \emph{DPS} is more sensitive to the measurement-guidance
balance in this setting and tends to produce grainy, unstable textures, especially in soft-tissue regions, indicating that
posterior-sampling guidance alone does not fully prevent drift toward visually plausible but poorly calibrated samples.
\emph{DDS} is visibly stronger than the other diffusion baselines and recovers sharper anatomical boundaries with fewer
gross failures. However, we emphasize that this result is obtained with a \emph{released CT-pretrained prior}, so it does
not correspond to exactly the same general-prior setting studied in this paper. In the present comparison, DDS should
therefore be interpreted mainly as a strong reference point rather than a like-for-like baseline.

In contrast, our method follows a terminal-conditioned formulation that is fundamentally different from step-wise
PnP-style designs. Rather than repeatedly injecting heuristic guidance along the stochastic path, we impose a hard
terminal feasibility operator at the endpoint of the reverse process. This enables composition with arbitrary pretrained
generative priors in a modular manner: the prior specifies the base stochastic dynamics, while the task information is
isolated into the terminal specification that enforces measurement-consistent decoding and projection. As a result, the
final reconstruction is explicitly tied to the measurement model through the terminal constraint, making the solution
less fragile to the absolute strength of the prior or to prior pretraining domains that are not perfectly matched to
the measurement target. When the measurement operator is accurate and the terminal projection is well-defined,
endpoint feasibility is enforced by construction up to numerical tolerances, whereas PnPDM, DPS, and SiTCoM do
not provide an explicit terminal guarantee and can drift toward prior-preferred samples when the prior and the
measurement target are inconsistent. Qualitatively, BSDE remains the most stable across slices: it suppresses streaks,
preserves subtle textures and soft-tissue transitions, and avoids the contrast drift or grainy artifacts seen in the
step-wise diffusion baselines. Compared with DDS, BSDE is slightly smoother in some regions but shows more
calibrated global appearance under the same measurement model, which is consistent with our goal of enforcing
terminal feasibility under a general-prior setting rather than relying on a tightly matched CT-domain prior.

Notably, BSDE better maintains lung texture and
sharp soft-tissue transitions without introducing the over-smoothing seen in TV/TVDIP. 

\begin{figure}[h]
    \centering
    \includegraphics[width=\linewidth]{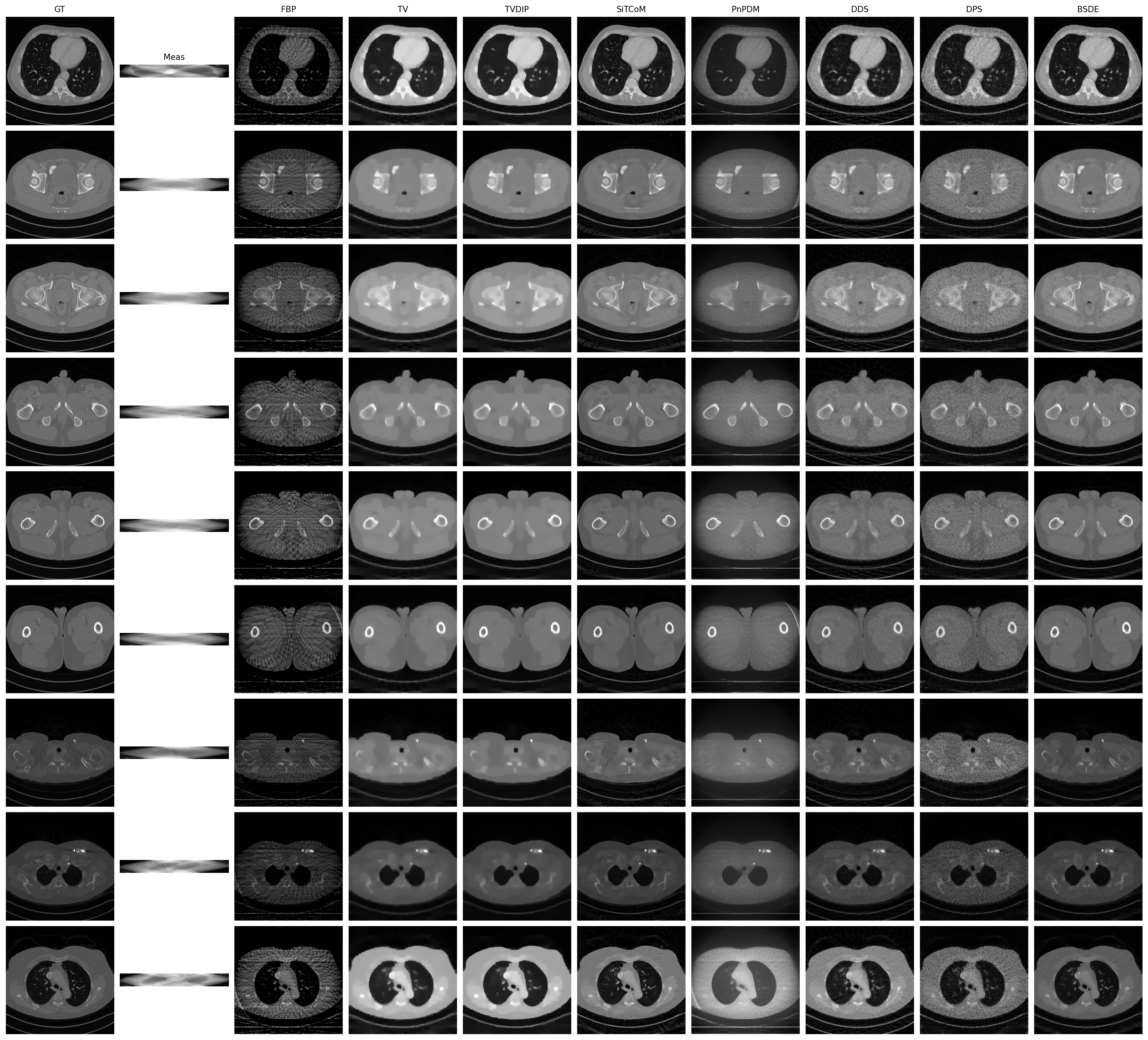}
    \caption{{Additional qualitative results.}}
    \label{fig:appldct}
\end{figure}

\subsection{Further discussion on DDS and prior matching}
Table~\ref{tab:ct_results} shows that, under the intended general-prior setting, BSDE provides the strongest overall quantitative performance.
Among the baselines that do not rely on a released CT-domain pretrained prior in our evaluation setting, BSDE outperforms PnPDM, DPS, and SiTCoM across all reported image-quality and error metrics.
This is the main quantitative comparison aligned with our problem definition: enforcing terminal feasibility under a frozen general prior, rather than assuming that the prior is already tightly matched to the CT measurement domain.

We include DDS here only as an additional reference. The reason is that DDS uses a released \emph{CT-pretrained prior}, and therefore does not correspond to the same general-prior setting studied in this paper.
For this reason, we do not center the main quantitative claim on DDS, and defer a more detailed discussion of this fairness mismatch to the Appendix.

Even under this non-like-for-like comparison, the result is still useful to read carefully.
DDS attains slightly higher PSNR/NPSNR ($31.10/30.93$ vs.\ $30.99/30.85$), but BSDE remains better on MAE ($3.57$ vs.\ $4.38$), SSIM ($0.868$ vs.\ $0.806$), NSSIM ($0.869$ vs.\ $0.811$), NMSE ($0.0101$ vs.\ $0.0118$), and NCC ($0.989$ vs.\ $0.988$).
So we do not read DDS as overturning the main result.
Rather, it indicates that a domain-matched CT prior can slightly favor pixel-wise distortion metrics, while BSDE still retains stronger structural consistency and lower overall error in the intended general-prior regime.

Therefore, our main takeaway remains unchanged:
BSDE is most competitive in the setting this paper is actually about, namely reconstruction with a frozen prior that may be imperfect or mismatched, where correctness is enforced by the terminal constraint rather than inherited from a domain-specific CT prior.

\subsection{BSDE for star Light Curve Inversion and Conditional Generation}
\label{app:toysamples}
The visual comparison of the proposed approach against other generation methods is presented in the Fig. \ref{fig:congenlight}.  We use the star lightcurve dataset \cite{Rebbapragada_2008} , which encapsulates time series data corresponding to three distinct categories of star lightcurves: Cepheid, Eclipsing Binary, and RR Lyrae. Each stellar class portrays exclusive periodic light fluctuations, contributing significantly to astronomical explorations. Fig. \ref{fig:congenlight} manifests unsupervised conditional generation results of individual samples for various methods across the triad of distinct classes.

Our proposed BSDE-based Diffusion model exhibits a noteworthy performance superiority over the GAN-inversion \cite{xia2022gan}, LSTMVAE \cite{lstmvae}, and SDE-based Diffusion\cite{meng2022sdedit} methods in the realm of unsupervised conditional generation of star lightcurves signals. Fig. \ref{fig:congenlight} presents rows of unsupervised conditional generation results sourced from the proposed BSDE-based Diffusion model, the GAN-based Inversion model, the SDE-based Diffusion Perturbing model, the LSTMVAE-based model, and samples drawn from the target classes for the purpose of comparison. From this visual inspection, it is evident that the BSDE Diffusion model continually surpasses its counterparts for all three classes of star types (Cepheid, Eclipsing Binary, and RR Lyrae), implying the model's enhanced precision in echoing the genuine star lightcurve distributions.

The visualization shows the effectiveness of our BSDE-based Diffusion model in capturing and replicating the complex temporal dynamics inherent in these celestial time series data, thus establishing it as a promising tool for advance time series generation tasks.

\begin{figure}[h]
    \centering
    \includegraphics[width=\textwidth]{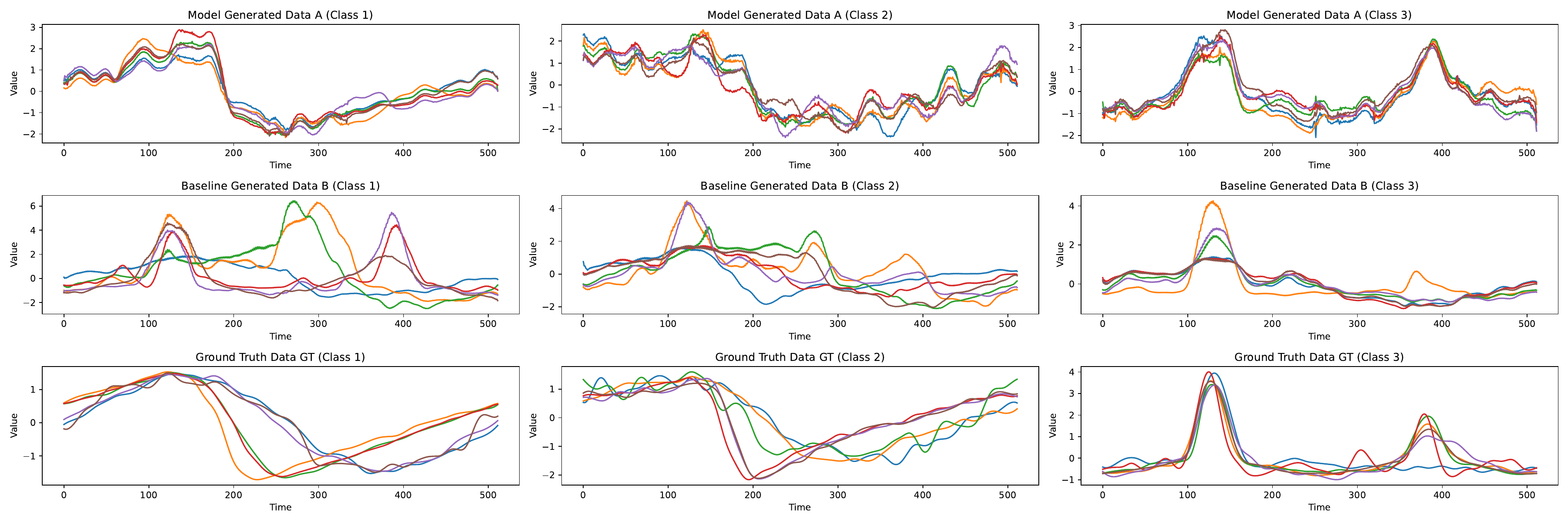}
    \caption{Conditional Generation of lightcurves (time series data).}
    \label{fig:congenlight}
\end{figure}

\begin{figure}[h]
    \centering
    \includegraphics[width=\textwidth]{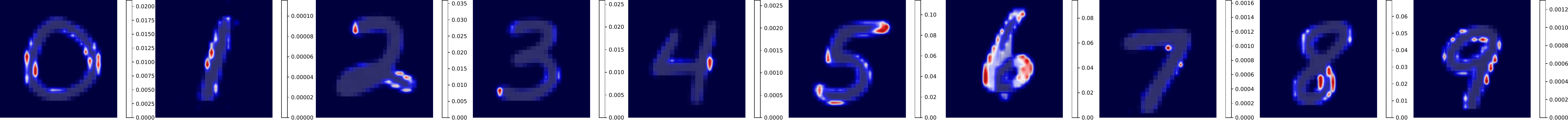}
    \caption{One sample uncertainty quantification on MNIST dataset.}
    \label{fig:uq}
\end{figure}

\begin{figure}[h]
    \centering
    \includegraphics[width=\textwidth]{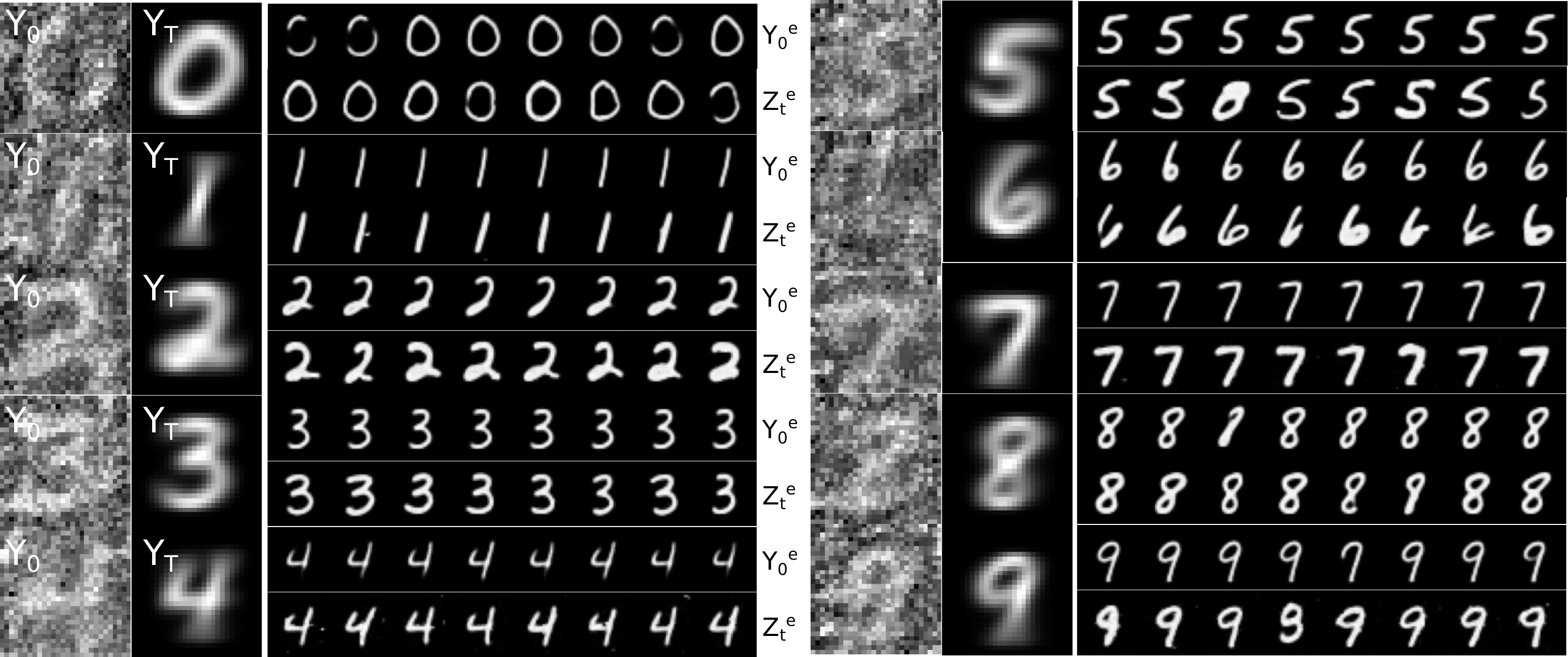}
    \caption{BSDEs-based Diffusion for Conditioning Generation on MNIST Dataset (2D data).}
    \label{fig:ResultSamplingAppendix}
\end{figure}

\subsection{BSDE for Diffusion Inversion and Conditional Generation}
Fig. \ref{fig:ResultSamplingAppendix} displays the diffusion inversion outcomes achieved when employing the BSDE-based Diffusion model with target mean images of $Y_T$ from each class to procure the latent representation $Y_0$ of the intended target.

Once we obtain the latent noise encoding, 
as shown in the Fig. \ref{fig:ResultSamplingAppendix}, we can then perform conditional generation by using the proposed \textit{$Y_0$-neighborhood Sampling} algorithm. The figure shows diversity of the generation for different numbers. In the absence of the two transformations, the uncertainty tied to a single sample generation can still be quantified via the MC method delineated in algorithm \ref{algo:UQ_fix}. We exhibit the uncertainty related to one sample generation in Fig. \ref{fig:uq}.



\section{Additional Experimental and Implementation Details}
\label{app:extra_details}

\subsection{SVCT forward model and exact experimental setting}
\label{app:svct_setup}

To make the CT setting fully explicit, we describe the degradation model used in the SVCT experiments.
The task is \emph{sparse-view CT with light noise}, not sparse-view CT in the strict Poisson-photon sense.
We use a parallel-beam Radon geometry with \textbf{30 uniformly spaced projection views over $180^\circ$},
i.e., over the nonredundant angular range $[0,\pi]$.
Let $x_0 \in \mathcal{X}$ denote the target image and let $\mathcal{A}:\mathcal{X}\to\mathcal{Y}$ denote the forward projector.
The measurement is generated as
\begin{equation}
y_0 = \mathcal{A}(x_0) + \eta,
\end{equation}
where $\eta$ is additive Gaussian noise applied to the sinogram with variance $10^{-5}$.
Thus, in our experiments, the main difficulty comes from \emph{severe angular undersampling}, while the measurement noise is relatively light.

\paragraph{Preprocessing note.}
All baselines and the proposed method are evaluated under the same forward operator $\mathcal{A}$, the same 30-view geometry, and the same Gaussian-noise level.
We also ensure consistent image-range handling during export and evaluation.

\subsection{CT-specific realization of the terminal specification}
\label{app:ct_terminal_operator}

In the toy setting, the terminal target $\xi$ is explicitly prescribed.
In CT reconstruction, however, the terminal specification is typically \emph{not} available as a closed-form latent anchor.
Instead, it is defined implicitly through measurement feasibility under the forward model.

For a measurement $y_0$, we define the feasible image set
\begin{equation}
\mathcal{S}_{\varepsilon}(y_0)
\;=\;
\left\{
x \in \mathcal{X} \;:\;
\|\mathcal{A}(x)-y_0\| \le \varepsilon \|y_0\|
\right\}.
\end{equation}
The terminal specification operator $\Psi$ should therefore be interpreted as a \emph{terminal-consistency operator}:
it encodes the requirement that the decoded terminal latent correspond to an image in $\mathcal{S}_{\varepsilon}(y_0)$.
Equivalently, in the CT case, $\Psi(y_0)$ is realized numerically by minimizing the terminal-domain measurement discrepancy
\begin{equation}
\mathcal{L}_{\mathrm{meas}}
\;=\;
\mathbb{E}\!\left[\|\mathcal{A}(\hat x)-y_0\|^2\right],
\end{equation}
where the expectation is taken over the innovation variables in the discretized score-defined recursion.
This means that the CT pipeline is not an example where the terminal anchor is removed.

\paragraph{CT-specific implicit realization of the terminal condition.}
For completeness, we summarize the CT pipeline used to realize the implicit terminal specification:
\begin{enumerate}
    \item Given a measurement $y_0$, initialize the BSDE parameters $(\alpha,\beta)$ for the recovered noisy prior state and the representation process.
    \item Solve the BSDE on $[0,\tau]$ to obtain the inferred latent state at noise level $\tau$.
    \item Run the frozen denoising recursion from noise level $\tau$ to the data end.
    \item Decode the resulting data-end latent to obtain $\hat x = D(\hat z_0)$.
    \item Evaluate the terminal-domain measurement loss
    \[
    \mathcal{L}_{\mathrm{meas}}
    =
    \mathbb{E}\!\left[\|\mathcal{A}(\hat x)-y_0\|^2\right].
    \]
    \item Update $(\alpha,\beta)$ by gradient descent through the discretized BSDE solver and denoising recursion.
    \item Repeat until convergence and output the final reconstruction $\hat x$.
\end{enumerate}

\subsection{Explicit-terminal and implicit-terminal regimes}
\label{app:explicit_implicit_terminal}

The paper contains two practically different terminal-conditioned regimes.

\paragraph{Explicit-terminal regime.}
In the toy experiments, the terminal target $\xi$ is explicitly given.
The BSDE is therefore trained by direct terminal matching, and the terminal-value formulation appears in its most direct form.

\paragraph{Implicit-terminal regime.}
In real inverse problems such as CT, a closed-form latent target is typically unavailable.
The terminal condition still exists conceptually, but it is specified through forward-model-defined feasibility rather than by an explicit latent anchor.
This is why Eq.~(22) is introduced: it numerically realizes the same terminal requirement in the terminal domain.
In this sense, the CT pipeline remains terminal-conditioned, but the terminal specification is \emph{implicit}.

This distinction is important because it clarifies why the CT implementation should not be interpreted as abandoning the BSDE formulation.
The optimization is still carried out within the BSDE-parameterized family induced by the terminal-conditioned construction, rather than as unconstrained latent fitting.

\subsection{Baseline implementation details}
\label{app:baseline_details}

To make the comparisons reproducible, we summarize the baseline implementations and the main hyperparameters used in our experiments.
We start from the released implementations of each method and only adapt the task-specific CT geometry and measurement setup to the 30-view SVCT setting with Gaussian noise variance $10^{-5}$.

\subsection{Further discussion on prior matching and why we do not require a CT-specific prior}
\label{app:prior_matching}

It is common in inverse-problem papers to use a prior pretrained on tightly matched in-domain data.
However, this is not the setting studied in our paper.
Our goal is to handle the more realistic case where the prior may be imperfect or mismatched with the target measurement domain.
In practice, mismatch can arise for many reasons, including scanner differences, protocol changes, preprocessing pipelines, site-specific variation, or domain shifts between the source data used to train the prior and the test-time measurements.

This is precisely the issue our BSDE formulation is designed to address.
We do not want correctness to depend on the prior already being well aligned with the measurement distribution.
Instead, the forward model defines the terminal requirement, and the BSDE constrains the final reconstruction to remain consistent with the observed measurement even when the prior is imperfect or mismatched.

In that sense, the main distinction is not simply whether a method uses a task-specific prior.
For example, SiTCoM also does not require a task-specific retrained prior.
The more important difference is how measurement consistency is enforced:
our method uses terminal feasibility as the main constraint, rather than relying on the prior to already be correct.

\subsection{How the score-induced driver is chosen}
\label{app:driver_choice}

A remaining practical question is how the BSDE driver $\hat f$ is chosen once a prior family is fixed.
The general associated BSDE is written using a generic driver notation to allow dependence on the forward prior process.
In the score-induced instantiation used in this paper, the effective driver is chosen as a coupling of the pretrained score field and the BSDE state:
\begin{equation}
f_{\theta}(t,x,y,z) \;=\; \hat f\bigl(t, s_{\theta}(y,t), z\bigr).
\end{equation}
The dependence on the prior family therefore enters through the selected score model $s_{\theta}$ and its noise schedule.

For the toy VE-style construction used in Eq.~(17), the driver specializes to
\begin{equation}
\hat f\!\left(s_{\theta}(Y_t,t), Z_t\right)
=
-\sigma_t^2 s_{\theta}(Y_t,t) + \alpha Z_t.
\end{equation}
Setting $\alpha=0$ isolates the effect of terminal conditioning and yields
\begin{equation}
dY_t = -\sigma_t^2 s_{\theta}(Y_t,t)\,dt + Z_t\,dW_t.
\end{equation}
Thus, Eq.~(17) is not an arbitrary choice: it is the score-induced specialization associated with the selected VE-style prior and the chosen terminal-conditioning mechanism.

\subsection{Practical limitations and computational cost}
\label{app:limitations_cost}
Our method inherits the practical limitations of neural BSDE solvers and score-based inverse solvers more broadly.

In our current implementation, a representative CT reconstruction run takes approximately $41.3$\,s for inference, plus $2.8$\,s for model loading.
The optimization stage dominates the runtime ($40.9$\,s).
Peak GPU memory usage is about $886$\,MB allocated ($898$\,MB reserved).

The theoretical guarantees in Section~3 apply to the exact adapted BSDE solution under the stated assumptions.
In practice, the method uses time discretization and neural approximation of the representation process.
A preliminary discretization study indicates that the method remains reasonably stable over several solver choices, with relative measurement residuals around $0.0015$--$0.0017$ for
$k_{\mathrm{sde}} \in \{1,8,16,32\}$.

Still, approximation and discretization errors remain part of the practical gap between theory and implementation.


\end{document}